\documentclass{article}

 \usepackage[preprint]{neurips_2026}

\usepackage[utf8]{inputenc} % allow utf-8 input
\usepackage[T1]{fontenc}    % use 8-bit T1 fonts
\usepackage{hyperref}       % hyperlinks
\usepackage{url}            % simple URL typesetting
\usepackage{booktabs}       % professional-quality tables
\usepackage{amsfonts}       % blackboard math symbols
\usepackage{nicefrac}       % compact symbols for 1/2, etc.
\usepackage{microtype}      % microtypography
\usepackage{xcolor}         % colors
\usepackage{graphicx}
\usepackage{amsthm}
\usepackage{bm}
\usepackage{amsmath}
\usepackage{amssymb}
\usepackage{mathtools}
\usepackage{microtype}
\usepackage{graphicx}
\usepackage{subcaption}
\usepackage{booktabs} % for professional tables
\usepackage{adjustbox}
\usepackage{pifont}
\newtheorem{theorem}{Theorem}
\newtheorem{lemma}{Lemma}
\usepackage{multirow}

\usepackage{enumitem}

\title{Gaussian Relational Graph Transformer}

\author{
	\textbf{Zezhong Ding$^{1, 3, \dag}$, \quad Jin Li$^{2, 3, \dag}$,\quad Xugang Wang$^4$,\quad Xike Xie$^{2, 3}$\thanks{Corresponding Author \quad  $^\dag$Equal Contribution}}\vspace{3pt} \\
     {\small $^1$School of Artificial Intelligence and Data Science, University of Science and Technology of China (USTC)} \\
	 {\small $^2$School of Biomedical Engineering, USTC} \\
	{\small $^3$Data Darkness Lab, Suzhou Institute for Advanced Research, USTC} \\
    {\small $^4$Chinese Academy of Sciences} \\ 
	\texttt{\small \{zezhongding,lijinstu\}@mail.ustc.edu.cn, xgwang@iaii.ac.cn, xkxie@ustc.edu.cn} 
}

\begin{document}

\maketitle

\begin{abstract}
Relational graph learning models relational databases as graphs and has demonstrated superior performance on a wide range of relational predictive tasks. 
However, existing methods struggle to capture long-range dependencies due to information decay in their message-passing mechanisms, and recent relational graph transformers remain limited in jointly modeling structural, semantic, and temporal information.
In this paper, we propose GelGT, a Gaussian relational graph transformer that explicitly addresses these challenges. 
GelGT introduces a structure-semantic collaborative sampling strategy to preserve structural connectivity while filtering irrelevant semantic information, and incorporates a Gaussian graph attention mechanism with a learnable Gaussian bias on the sampled subgraphs to dynamically encode temporal dependencies. 
Extensive experiments on various real-world datasets demonstrate that GelGT achieves state-of-the-art downstream task performance, with up to a \textbf{13.8\%} improvement in predictive performance.
\end{abstract}

\section{Introduction}
\label{sec:intro}
Relational databases are widely used for managing structural data across diverse industries, ranging from e-commerce and finance to healthcare \cite{AdityaBCHNS02, AgrawalSX01, 0019901}. 
Traditionally, predictive tasks on such databases have relied on tabular models \cite{ChenG16, Shwartz-ZivA22, grinsztajn2022why, chen2023trompt, voskou2024transformers}. 
However, these models predominantly rely on manual feature engineering to flatten relational tables \cite{DwivediKHL25, KanterV15, LamBMMWKBSWS21}, which limits their ability to capture complex relational structures for downstream tasks. 
Recent approaches model relational databases as {\it relational graphs} \cite{0001RHHHDFLYZHL24, ChenKL25,FeyHHLR0YYL24} 
acting as a bridge between tabular data and graph learning by projecting tabular rows into nodes and foreign keys into edges.
This representation enables information propagation across tables through relational graph learning~\cite{abs-2002-02046, FeyHHLR0YYL24}, thereby effectively modeling the relational dependencies.

\begin{figure*}[t]
    \centering
    \includegraphics[width=\textwidth]{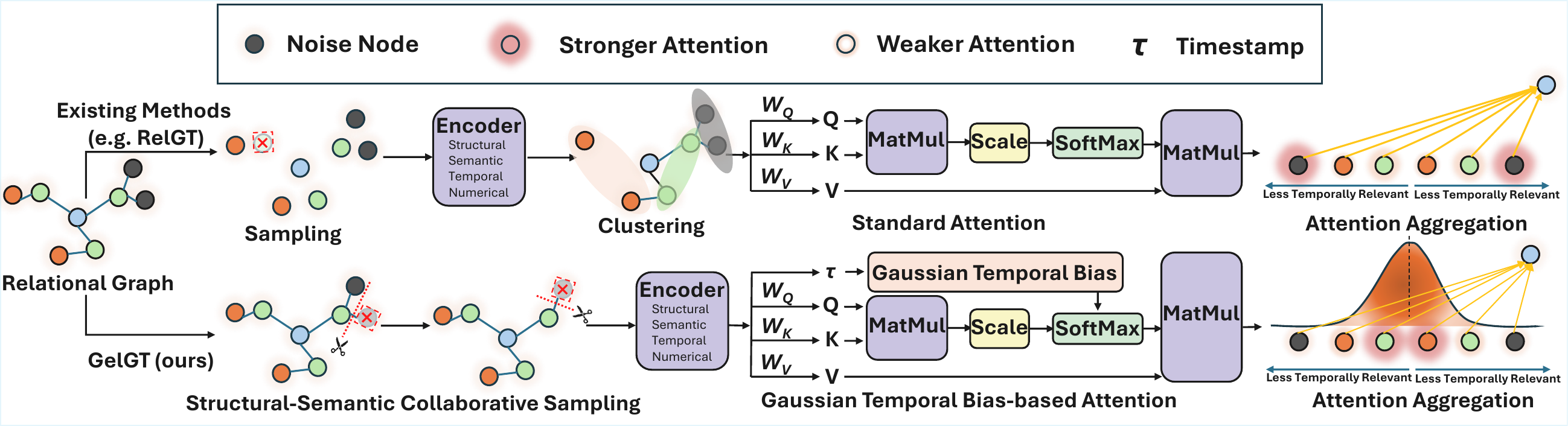}
    \caption{\textbf{Comparison of existing methods and our proposed GelGT.} \textbf{Structurally}, GelGT alleviates structural fragmentation by preserving node connectivity during sampling. \textbf{Semantically}, it mitigates the interference of semantically noisy nodes during sampling. \textbf{Temporally}, it employs Gaussian Temporal Bias to explicitly distinguish between temporally relevant and noisy nodes.}
    \label{fig:intro}
\end{figure*}

Existing relational graph learning methods (e.g., RDL \cite{0001RHHHDFLYZHL24} and RelGNN \cite{ChenKL25}) are primarily built upon message-passing mechanisms. However, relational graph learning often requires long-range perception to incorporate global tabular information~\cite{dwivedi2026relational}. Due to information decay in message passing, these methods struggle to capture long-range dependencies \cite{WuJWMGS21, WuZLWY22}.

More recently, RelGT~\cite{dwivedi2026relational} builds a relational graph transformer and introduces attention mechanisms over sampled subgraphs to alleviate this limitation. 
While effective, it still suffers from three fundamental issues, as shown in Figure~\ref{fig:intro}.
\textbf{Structurally}, it typically relies on random sampling~\cite{ranjan2026relational, 0001RHHHDFLYZHL24, ChenKL25}, which disregards the connections among entities, leading to structural fragmentation \cite{meher2025insidecorekgevaluatingstructured}, where the relational graph is often sampled into isolated subgraphs. \textbf{Semantically}, its global attention clusters node embeddings before computing attention, inevitably aggregating information from irrelevant clusters and introducing semantic noise. \textbf{Temporally}, it determines temporal importance solely based on the similarity of entangled embeddings (including temporal, semantic, structural, and numerical embeddings), making it difficult to  
explicitly distinguish temporally relevant nodes from irrelevant ones.

To address the aforementioned limitations, we propose the {\it \underline{G}aussian r\underline{el}ational \underline{g}raph \underline{t}ransformer} ({\bf GelGT} in short), as shown in Figure~\ref{fig:intro}. 
GelGT incorporates a \textit{structural-semantic collaborative sampling strategy} to construct subgraphs, mitigating structural fragmentation and semantic irrelevance. 
On these subgraphs, it uses a \textit{Gaussian temporal bias-based attention} to effectively distinguish temporally relevant from noisy nodes. However, designing such a relational graph transformer raises two primary challenges.

\textbf{Challenge 1: Subgraph Sampling.} 
Sampling subgraphs that both alleviate structural fragmentation and {mitigate the interference of semantically noisy nodes} is challenging. 
Preserving structural connectivity requires sampling a sufficiently large number of nodes, but this also introduces semantic noise~\cite{MenczerPS04}.
Balancing both objectives is non-trivial: it requires bounding the number of sampled nodes while minimizing structural fragmentation and semantic noise.

\textbf{Challenge 2: Gaussian Bias-based Attention.} 
Applying an appropriate Gaussian bias to attention mechanisms is challenging. The Gaussian bias follows a Gaussian distribution characterized by its mean and variance. 
As the mean determines the temporal center of the attention distribution, an improper choice may cause the model to attend to irrelevant regions, thus failing to capture critical temporal dependencies.
Meanwhile, the variance determines the temporal receptive field 
\cite{GuoZL19}, where an overly small variance restricts the attention to a narrow local temporal neighborhood and an excessively large variance disperses the attention weights across distant nodes, introducing temporal and semantic noise.

Our main contributions are summarized as follows:
\begin{itemize}[left=-1pt]
    \item
    We propose \textbf{Gaussian Relational Graph Transformer (GelGT)}, 
    a relational graph transformer that jointly addresses structural, semantic, and temporal challenges 
    in relational graph learning through collaborative subgraph sampling and Gaussian graph attention.
    \item 
We theoretically show that structure-semantic collaborative sampling preserves the vast majority ($\geq99\%$) of effective structural information and can effectively mitigate semantic noise in seed node embeddings.
Additionally, we prove that Gaussian graph attention assigns higher attention scores to temporally relevant nodes and lower attention scores to temporal noise nodes, compared to conventional attention without Gaussian bias (e.g., RelGT).
    \item 
    Extensive experiments on \textbf{7} datasets and \textbf{21} tasks demonstrate that GelGT achieves the state-of-the-art performance on both classification and value regression tasks, with up to a \textbf{13.8\%} improvement in predictive performance.
\end{itemize}

\section{Preliminaries}
\paragraph{Relational Database (RDB)~\cite{Codd70,Codd79}.} A {\it relational database} is formally defined as a collection of tables, denoted by $\mathcal{R} = \{\bm{T}_1, ..., \bm{T}_n \}$, where $n$ represents the total number of tables. 
Each row in $\bm{T}_i$ corresponds to an entity, which is uniquely identified by a primary key and may contain references to entities in other tables through foreign keys, together with entity-specific attributes and associated timestamp information.

\paragraph{Relational Graph~\cite{FeyHHLR0YYL24}.} The structure of a relational database inherently forms a graph representation, referred to as a {\it relational graph}. A relational graph is formally defined as a heterogeneous temporal graph $\mathcal{G} = \{\mathcal{V}, \mathcal{E}, \bm{\phi}, \bm{\psi}, \bm{\tau}\}$, where nodes $\mathcal{V}$ represent entities from database tables, edges $\mathcal{E}$ represent primary-foreign key relationships, $\bm{\phi}$ maps nodes to their respective types based on source tables, $\bm{\psi}$ assigns relation types to edges, and $\bm{\tau}$ captures the temporal dimension through timestamps. 

\paragraph{RDB Task Reformulation.} Based on the relational graph construction, predictive tasks on RDBs can be naturally cast as learning problems on $\mathcal{G}$. Let $\mathcal{Y}$ denote a target attribute column within a table $\bm{T}_i$. Since each row $j$ in $\bm{T}_i$ corresponds to a unique node $v_j \in \mathcal{V}$, predicting a value in the $j$-th row of $\bm{T}_i$ can be formulated as learning a mapping function $f$: $\mathcal{V} \to \mathcal{Y}$ that leverages both node attributes and graph topology. Following previous work \cite{dwivedi2026relational}, we focus on two paradigms: (i) {\it node classification}, where $\mathcal{Y}$ is a finite set of discrete categorical labels, and (ii) {\it node regression}, where $\mathcal{Y} \subseteq \mathbb{R}$ represents a continuous range of numerical values.

\section{The Proposed Method: Gaussian Relational Graph Transformer (GelGT)}

This section introduces the Gaussian relational graph transformer (GelGT), as illustrated in Figure \ref{fig:model}. 
GelGT consists of two components: \textit{structure-semantic collaborative sampling} (Steps \ding{172}-\ding{174}) and \textit{Gaussian graph attention model} (Steps \ding{175}-\ding{177}). 
The former constructs a query-specific computation subgraph, which enables the latter to learn temporally aware representations. Section \ref{section:3.1} and Section \ref{section:3.2} describe the two components, respectively.

\begin{figure*}[t]
    \centering
    \includegraphics[width=1.0\textwidth]{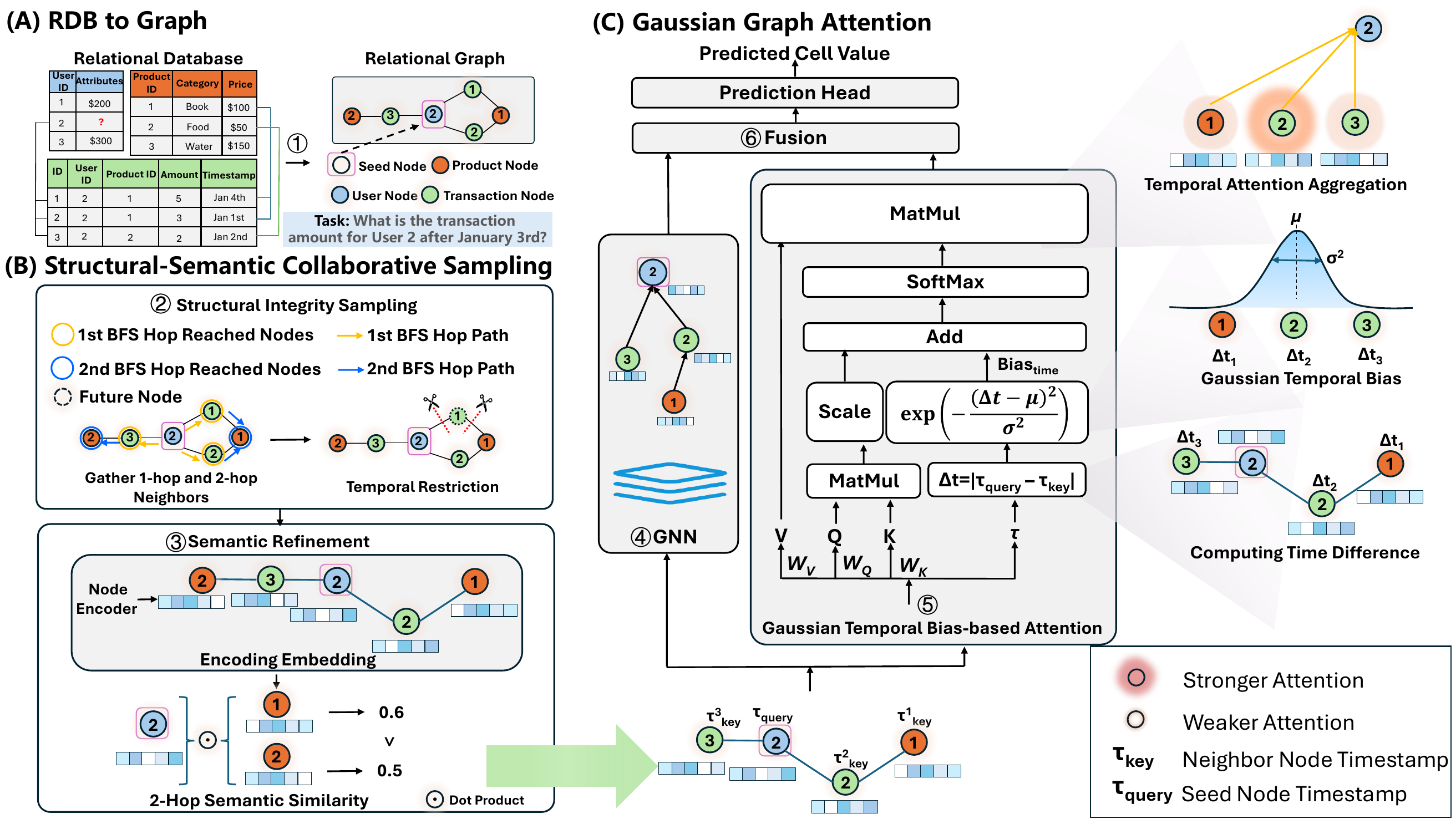}
    \caption{\textbf{Overview of the GelGT Framework}. The process consists of six main steps: \ding{172} \textbf{Graph Construction} converts the relational database into a relational graph. \ding{173} \textbf{Structural Integrity Sampling.} This stage samples subgraphs via BFS while enforcing strict timestamp constraints to mask future nodes for temporal validity. 
    \ding{174} {\bf Semantic Refinement.} It filters noise by retaining only neighbors with high semantic similarity, calculated through dot products of encoded features. \ding{175} A \textbf{GNN} module aggregates topological structural features. \ding{176} \textbf{Gaussian Temporal Bias-based Attention.} It leverages a Gaussian temporal bias to assign high attention scores to temporally relevant nodes while allocating low scores to noisy nodes. 
    \ding{177} Finally, the \textbf{Fusion} module adaptively integrates the outputs of the GNN and the Gaussian Temporal Bias-based Attention via a learnable weight \cite{liduetgraph}.}
    \label{fig:model}
\end{figure*}

\subsection{Structure-Semantic Collaborative Sampling}
\label{section:3.1}
The goal of our sampling strategy is to construct a subgraph that preserves structural connectivity while filtering out semantically irrelevant nodes. This yields the final subgraph. 
The resulting subgraph explicitly defines the valid interaction scope
for the subsequent attention model, and is constructed via two stages: (1) \textit{structural integrity sampling} and (2) \textit{semantic refinement}.

\paragraph{Stage 1: Structural Integrity Sampling.} The goal of this stage is to capture the complete graph topology around a seed node $v$. 
To achieve this, we employ a Breadth-First Search (BFS) strategy to expand the neighborhoods from seed node $v$. 
During the expansion, we ensure correctness and efficiency by strictly traversing to neighbors that satisfy temporal causality\footnote{Temporal causality \cite{PrabhakarOWAR10} requires that the timestamp of any sampled neighbor precedes that of the query node (i.e., $\tau_{\text{neighbor}} < \tau_{\text{seed}}$), to avoid temporal information leakage.} and limit the search depth to 2 hops. This process yields a sampled node set. 
Finally, we construct the subgraph induced by the sampled node set.
Formally, we define the sampled node set $\mathcal{N}_{\textrm{sampled}}(v)$ and the resulting sampled subgraph $\mathcal{G}_{\mathrm{sub}}(v)$ as:
\begin{align}
    \mathcal{N}_{\mathrm{sampled}}(v) &= \{v\} \cup \{ u \in \mathcal{V} \mid \text{dist}(u, v) \leq 2 \}, \\
    \nonumber \mathcal{G}_{\mathrm{sub}}(v) &= (\mathcal{N}_{\mathrm{sampled}}(v), \mathcal{E}_{\mathrm{sub}}), \text{~~where}\\
    \mathcal{E}_{\mathrm{sub}} &= \{(i, j) \in \mathcal{E} \mid i, j \in \mathcal{N}_{\mathrm{sampled}}(v)\}
\end{align}
where $\mathcal{V}$ and $\mathcal{E}$ denote the node set and the edge set of the relational graph, respectively, and $\mathrm{dist}(u, v)$ is the shortest path distance between nodes $u$ and $v$.

\paragraph{Theoretical Analysis.} We theoretically show that the structural integrity sampling effectively preserves the structural information of the relational graph. To quantify this, we use Katz centrality \cite{katz1953new} as the topological metric and denote the Katz centrality computed on the full graph and the sampled subgraph as $C^{\mathcal{G}}(v)$
and $C^{\mathcal{G}_{\mathrm{sub}}}(v)$, respectively.
We establish the quantitative relationship between the structural loss $\Delta C(v) = | C^{\mathcal{G}}(v) - C^{\mathcal{G}_{\mathrm{sub}}}(v) |$ arising from structural integrity sampling and the full-graph structural information in Theorem \ref{lab:them1}.

\begin{theorem}[\textbf{Upper Bound of Relative Structural Loss}]
\label{lab:them1}
The ratio of the structural loss $\Delta C(v) = | C^{\mathcal{G}}(v) - C^{\mathcal{G}_{\mathrm{sub}}}(v) |$ arising from structural integrity sampling to the full-graph structural information $C^{\mathcal{G}}(v)$ satisfies:
\begin{equation}
    \frac{|\Delta C(v)|}{C^{\mathcal{G}}(v)} \leq 0.01
\end{equation}
\end{theorem}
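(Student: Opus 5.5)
We argue directly from the walk-sum definition of Katz centrality. With attenuation factor $\alpha$ satisfying $0<\alpha<1/\lambda_{\max}(A)$, where $A$ is the adjacency matrix of $\mathcal{G}$, we use
\[
C^{\mathcal{G}}(v)=\sum_{k\ge 1}\sum_{u\in\mathcal{V}}\alpha^k (A^k)_{uv},
\]
which is a weighted count of all walks of length $k$ ending at $v$; $C^{\mathcal{G}_{\mathrm{sub}}}(v)$ is defined analogously with the induced adjacency matrix $A_{\mathrm{sub}}$. Since $\mathcal{G}_{\mathrm{sub}}(v)$ is an induced subgraph, every walk in it is also a walk in $\mathcal{G}$. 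Hence $0\le C^{\mathcal{G}_{\mathrm{sub}}}(v)\le C^{\mathcal{G}}(v)$, and $\Delta C(v)$ is exactly the weighted count of walks ending at $v$ that use at least one node or edge outside $\mathcal{G}_{\mathrm{sub}}(v)$.

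\textbf{Step 1: short walks are fully preserved.} Any walk of length $k\le 2$ ending at $v$ only visits nodes within distance $2$ of $v$. Every edge between two such nodes lies in $\mathcal{E}_{\mathrm{sub}}$ because the subgraph is induced. So all walks with $k\le 2$ are counted in both centralities, and the loss involves only terms with $k\ge 3$:
\[
\Delta C(v)\le \sum_{k\ge 3}\alpha^k\big((A^k)_{\cdot v}\text{-sum}\big).
\]
A walk of length $3$ can leave and return, so a few $k=3$ terms may also survive. We simply upper bound the loss by all $k\ge3$ terms.

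\textbf{Step 2: tail against head.} We bound the ratio by
\[
\frac{\sum_{k\ge3}\alpha^k w_k}{\sum_{k\ge1}\alpha^k w_k}, \qquad w_k=\mathbf{1}^\top A^k e_v.
\]
Using $w_{k+1}\le d_{\max} w_k$, or more sharply $w_{k+j}\le \lambda_{\max}^{\,j}$ times a comparable term via the spectral bound, we get $w_k\le (\alpha')^{k-1}$-type geometric growth. Setting $\rho=\alpha d_{\max}$ (or $\alpha\lambda_{\max}$), the numerator is at most $\alpha w_1\sum_{k\ge3}\rho^{k-1}=\alpha w_1\rho^2/(1-\rho)$. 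The denominator is at least $\alpha w_1$. The ratio is therefore at most $\rho^2/(1-\rho)$.

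\textbf{Step 3: the constant.} To reach the $0.01$ bound we need $\rho^2/(1-\rho)\le 0.01$, i.e.\ $\rho\lesssim 0.095$. We therefore make explicit the standard choice of attenuation factor, $\alpha\le 0.095/d_{\max}$ or $\alpha\le 0.095/\lambda_{\max}$. This is the conventional regime of a "small" Katz attenuation, well inside the convergence radius.

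\textbf{Main obstacle.} The statement as written contains no parameter, yet the $0.01$ constant cannot hold for arbitrary $\alpha$ near $1/\lambda_{\max}$: long walks then dominate and the ratio tends to $1$. The delicate step is fixing and justifying the regime of $\alpha$, and proving the geometric walk-growth bound with a uniform constant relative to $w_1$ rather than only asymptotically. For the latter, we would first try the elementary degree bound $w_{k+1}\le d_{\max}w_k$, since it directly yields the comparison to $w_1=\deg(v)$ needed in the denominator.
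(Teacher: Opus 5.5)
Your degree-based argument is correct, and it takes a genuinely different route from the paper's. Unlike the paper's, it actually reaches the per-node inequality, under an explicit hypothesis on $\alpha$.

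The paper works with the whole centrality vector. It identifies the loss with the full tail, $\Delta\mathbf{C}=\sum_{k\ge3}(\alpha A)^k\mathbf{1}=(\alpha A)^2\mathbf{C}^{\mathcal{G}}$. You are more careful here: length-$\ge3$ walks inside the 2-hop ball survive, so the tail is only an upper bound. The paper then takes spectral norms with $\alpha=c/\rho(A)$ and quotes Benzi--Klymko for $c\le 0.1$, concluding $\|\Delta\mathbf{C}\|_2\le c^2\|\mathbf{C}^{\mathcal{G}}\|_2$. That is an aggregate $\ell_2$ statement. With $(\alpha A)^2$ on the left, the loss at $v$ is compared with the centralities of nodes two steps away, not with $C^{\mathcal{G}}(v)$.

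Your bound $w_{k+1}=\mathbf{d}^\top A^k e_v\le d_{\max}w_k$ (with $\mathbf{d}=A\mathbf{1}$) amounts to writing $\sum_{k\ge3}(\alpha A)^k\mathbf{1}=\sum_{k\ge1}(\alpha A)^k(\alpha A)^2\mathbf{1}\le\alpha^2\max_u(A^2\mathbf{1})_u\,\mathbf{C}^{\mathcal{G}}$ entrywise. In effect you replace the spectral norm by the maximal row sum of $A^2$, which is at most $d_{\max}^2$. The price is a stronger hypothesis ($\alpha d_{\max}$ small rather than $\alpha\rho(A)$ small); the gain is the pointwise conclusion. This display also lets you drop the $1/(1-\rho)$ factor. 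The ratio is at most $(\alpha d_{\max})^2$, so $\alpha d_{\max}\le 0.1$ already gives $0.01$.

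You should, however, discard the alternative $\alpha\le 0.095/\lambda_{\max}$. Neither $w_{k+1}\le\lambda_{\max}w_k$ nor the per-node conclusion holds in that regime. Here is a counterexample:
\begin{itemize}
\item Let $v$ be a pendant vertex of a hub $x$ that has $m$ further neighbours $y_1,\dots,y_m$, each carrying $m$ pendant leaves.
\item Then $\lambda_{\max}^2\approx 2m$, while $w_1=1$, $w_2=m+1$, $w_3=1+m(m+1)$.
\item The 2-hop subgraph of $v$ is the star on $\{v,x,y_1,\dots,y_m\}$. So the $m^2$ walks of length $3$ from $v$ into the leaves of the $y_i$ are lost, and the subgraph centrality stays bounded (near $0.005$).
\item With $\alpha=0.1/\lambda_{\max}$, the lost $k=3$ mass alone is about $10^{-3}\sqrt{m}/(2\sqrt{2})$. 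Hence $\Delta C(v)/C^{\mathcal{G}}(v)\to 1$ as $m\to\infty$, and it already exceeds $0.2$ at $m=100$.
\end{itemize}
So in the paper's regime $\alpha\rho(A)\le 0.1$ the per-node claim is false. Your ``main obstacle'' diagnosis is the right one: the constant must be tied to a regime for $\alpha$, and specifically to $d_{\max}$.
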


We provide a detailed proof of Theorem \ref{lab:them1} in Appendix \ref{lab:a.1}. Theorem \ref{lab:them1} demonstrates that the structural integrity sampling preserves at least 99\% of the effective structural information, ensuring that the sampled subgraph provides a reliable structural basis for subsequent modeling. 

\paragraph{Stage 2: Semantic Refinement.}
This stage refines the sampled subgraph $\mathcal{G}_\mathrm{sub}(v)$ constructed in stage 1 to mitigate semantic noise. 
Specifically, we first encode the raw attributes of all nodes in the subgraph $\mathcal{G}_\mathrm{sub}(v)$ to low-dimensional embeddings $\mathbf{h}$. 
Then, we calculate the dot product between the seed node $v$ and each node $u$ in its $2^{nd}$-hop neighborhood to measure semantic similarity: 
\begin{equation}
    s(v, u) = \mathbf{h}_v^\top \mathbf{h}_u.
\end{equation}

We then select the top-ranked $2^{nd}$-hop neighbors according to $s(v,u)$ and combine them with all $1^{st}$-hop neighbors to form the final sampled subgraph $\mathcal{G}_{\mathrm{final}}$.

\paragraph{Theoretical Analysis.} Since noise nodes tend to exhibit lower semantic similarity to the seed node, semantic refinement excludes them from the final sampled subgraph $\mathcal{G}_{\mathrm{final}}$. 
Let $\mathcal{I}$ denote the aggregated information from neighbors of the seed node, decomposed into semantically relevant information $\mathcal{I}_\text{relevance}$ and noise $\mathcal{I}_\text{noise}$. 
Semantic refinement therefore increases the relevance-to-noise ratio (by amplifying $\mathcal{I}_\text{relevance}$ and suppressing $\mathcal{I}_\text{noise}$), as formalized in Theorem \ref{lab:theorem2}.
\begin{theorem}[\textbf{Enhancement of Semantic Relevance-to-Noise Ratio}]
    \label{lab:theorem2}
    The semantic relevance-to-noise ratio after refinement strictly exceeds that of the pre-refinement stage. Formally,
    \begin{equation}
    \frac{\mathcal{I}^\mathrm{before}_{\mathrm{relevance}}}{\mathcal{I}^\mathrm{before}_{\mathrm{noise}}} < \frac{\mathcal{I}^\mathrm{after}_{\mathrm{relevance}}}{\mathcal{I}^\mathrm{after}_{\mathrm{noise}}}
    \end{equation}
    where $\{\mathcal{I}^\mathrm{before}_{\mathrm{relevance}}, \mathcal{I}^\mathrm{before}_{\mathrm{noise}}\}$ and $\{\mathcal{I}^\mathrm{after}_{\mathrm{relevance}}, \mathcal{I}^\mathrm{after}_{\mathrm{noise}}\}$ denote the semantic relevance and semantic noise before and after the refinement process, respectively.
\end{theorem}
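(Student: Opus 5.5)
We make the informal quantities precise with a simple additive model of aggregation, then show the ratio increases under top-ranked selection.

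\textbf{The model.} Each 2nd-hop node $u$ of the seed $v$ is either semantically relevant ($u\in\mathcal{R}$) or noise ($u\in\mathcal{N}$). Aggregation is additive: $\mathcal{I}_{\mathrm{relevance}}=\sum_{u\in\mathcal{R}\cap S} w_u$ and $\mathcal{I}_{\mathrm{noise}}=\sum_{u\in\mathcal{N}\cap S} w_u$, with positive contributions $w_u$, where $S$ is the set of retained nodes. The 1st-hop neighbors are kept in both the pre- and post-refinement subgraphs, so they add the same constants $R_1,N_1$ to both sides. We formalize ``noise nodes tend to exhibit lower semantic similarity'' as a separation assumption on the score $s(v,u)=\mathbf{h}_v^\top\mathbf{h}_u$. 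Either (i) deterministically, relevant nodes score above noise nodes, or more weakly (ii) $\Pr[u\in\mathcal{R}\mid s(v,u)=t]$ is strictly increasing in $t$. Under (ii) the statement is read in expectation.

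\textbf{Key steps.}

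First, write the before-ratio as $\frac{R_1+R_2}{N_1+N_2}$, where $R_2,N_2$ are the relevant and noise mass among all 2nd-hop nodes. Write the after-ratio as $\frac{R_1+R_2^k}{N_1+N_2^k}$, where $R_2^k,N_2^k$ are the masses among the top-$k$ nodes.

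Second, show that the top-$k$ set has a larger relevant fraction than the whole 2nd-hop set. Sort the nodes by score. Under (i), the top $k$ contain $\min(k,|\mathcal{R}|)$ relevant nodes. Whenever $k<$ the total number of 2nd-hop nodes and $\mathcal{N}\neq\emptyset$, at least one noise node is removed while relevant nodes are removed only after all noise nodes. Hence $\frac{R_2^k}{N_2^k}>\frac{R_2}{N_2}$, with the convention that the ratio is $+\infty$ if $N_2^k=0$. Under (ii), apply a Chebyshev/rearrangement-type inequality: the average of an increasing function over the top $k$ scores exceeds its average over all scores. This gives $\mathbb{E}[R_2^k]/\mathbb{E}[N_2^k]>\mathbb{E}[R_2]/\mathbb{E}[N_2]$. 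This step assumes comparable weights $w_u$, for example uniform, or weights independent of class given the score.

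Third, pass from the 2nd-hop ratio to the full ratio by mediant arguments. The removed set $D$ has masses $R_2-R_2^k$ and $N_2-N_2^k$. Its ratio is strictly below the before-ratio, because under (i) $D$ is noise-dominated and $\frac{R_1+R_2}{N_1+N_2}$ is a mediant of the kept part and $D$. Removing a component whose ratio lies below the mediant strictly increases the ratio. This yields the claimed strict inequality.

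\textbf{Main obstacle.} The hardest part is the second step. Its difficulty is not the algebra but justifying the separation assumption on $s(v,u)$, since a dot product need not separate relevance from noise. We will state the assumption explicitly. We will try the deterministic margin version first, because it gives strictness cleanly. Afterwards we will note the monotone-likelihood relaxation, under which the result holds in expectation. Strictness additionally requires that at least one noise node lies among the 2nd-hop neighbors and that $k$ is smaller than the number of 2nd-hop nodes; otherwise refinement removes nothing and the inequality holds only with equality.
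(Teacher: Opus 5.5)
\textbf{There is a genuine gap, and it is in your third step, not the second.} The mediant argument needs the discarded set $D$ to have ratio strictly below the \emph{full} before-ratio, i.e.\ $(R_2-R_2^k)(N_1+N_2^k) < (N_2-N_2^k)(R_1+R_2^k)$. Your second step does not give this. It only shows that $D$'s ratio is below the 2nd-hop pool ratio $R_2/N_2$.

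Under (i) this is harmless when $k\ge|\mathcal{R}|$, because then $D$ is pure noise. When $k<|\mathcal{R}|$ (the case you cover with the $+\infty$ convention), top-$k$ also discards relevant nodes. The 1-hop constants $R_1,N_1$ do not shrink along with the 2nd-hop part, so a better 2nd-hop ratio need not give a better overall ratio.

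Counterexample: take $R_1=0$, $N_1=100$, and ten relevant plus one noise 2nd-hop node, all of weight $1$, with $k=1$. Every hypothesis you list holds: separation (i), $k<11$, and $\mathcal{N}\neq\emptyset$. Yet the before-ratio is $10/101$ and the after-ratio is $1/100$. Under (ii) the problem is unavoidable, since $D$ generically carries positive expected relevant mass. For a noise-heavy 1-hop neighborhood the expected ratio then decreases.

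\textbf{How to repair it.} You need an extra hypothesis, and either of these works:
\begin{itemize}
\item Refinement discards only noise. Under (i) this means $k\ge|\mathcal{R}|$. You also need $R_1+R_2>0$ for strictness, since otherwise both ratios vanish.
\item The 1-hop part is at least as relevant as the 2nd-hop pool, $R_1/N_1\ge R_2/N_2$. Then the kept ratio is a mediant of $R_1/N_1$ and $R_2^k/N_2^k$, so it lies strictly above $R_2/N_2$ (given $R_2^k>0$). Hence it is strictly above $D$'s ratio.
\end{itemize}

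\textbf{Relation to the paper.} The first option is exactly what the paper assumes, in a different model. The paper writes $\mathbf{h}_u=\mu_{vu}\mathbf{h}_v+\mathbf{h}_u^{\mathrm{noise}}$, with relevance $\bigl(\sum_u w(v,u)\mu_{vu}\bigr)^2\|\mathbf{h}_v\|^2$ and noise $\sigma_{\mathrm{noise}}^2\sum_u w(v,u)^2$. It then simply posits that pruned 2-hop nodes have $\mu_{vu}\approx 0$. The numerator $A^2$ is therefore unchanged, while the denominator drops from $B+b$ to $B$ with $b>0$.

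So the paper's proof is the analogue of your pure-noise-$D$ case. Your separation assumption on $s(v,u)$ is a more explicit account of why pruned nodes should be noise, which the paper never derives. But it only yields that conclusion when $k\ge|\mathcal{R}|$. The extra generality you claim beyond that regime, including the monotone-likelihood relaxation, is exactly where the argument breaks.
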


Theorem~\ref{lab:theorem2} demonstrates that our proposed semantic refinement strategy can effectively mitigate semantic noise in {seed node embeddings}.
Detailed proof is in Appendix \ref{a.3}.

Through structural integrity sampling and semantic refinement, we obtain the final sampled subgraph $\mathcal{G}_\mathrm{final}$. It acts as the structural and semantic base for the subsequent stage, defining the structure and relevance contexts over which the GNN and attention mechanisms perform representation learning.

\subsection{Gaussian Temporal Bias-based Attention}
\label{section:3.2}
This module operates on the computed subgraph $\mathcal{G}_\mathrm{final}$ obtained in Section \ref{section:3.1}. 
Since the sampling stage ensures the structural and semantic validity of neighbor nodes, this model focuses on modeling temporal relevance between the target node and its neighbors. 
As shown in Figure \ref{fig:model}, we employ a composite framework integrating GNNs and attention mechanisms.
Specifically, we introduce an adaptive Gaussian temporal bias into the attention mechanism to modulate attention scores, making the model emphasize temporally relevant information while suppressing temporal noise. 

\paragraph{Adaptive Gaussian Temporal Bias.} 
A straightforward strategy for capturing temporal information, as in RelGT \cite{dwivedi2026relational}, is to encode timestamps jointly with other node attributes.
However, such integrated representations entangle temporal information with structural and semantic information, making it difficult for the attention mechanism to explicitly distinguish temporally relevant nodes from noise. 
To address this, we introduce a learnable attention bias that parameterizes temporal preference distribution via a continuous Gaussian function, allowing the attention mechanism to
selectively focus on relevant temporal information.

Formally, given the input feature matrix $\mathbf{H} \in \mathbb{R}^{N \times d}$ (derived from the encoders as shown in Appendix \ref{lab:encoder}), we define the standard linear projections for the Query ($\mathbf{Q}$), Key ($\mathbf{K}$), and Value ($\mathbf{V}$) matrices as:
$\mathbf{Q} = \mathbf{W}_Q \mathbf{H}, \quad \mathbf{K} = \mathbf{W}_K \mathbf{H}, \quad \mathbf{V} = \mathbf{W}_V \mathbf{H}$,
where $\mathbf{W}_Q, \mathbf{W}_K, \mathbf{W}_V$ are learnable parameter matrices. 
Different from vanilla self-attention \cite{VaswaniSPUJGKP17}, we explicitly inject a temporal bias $\text{Bias}_{\text{time}}$ into the {scaled attention score ($\frac{\mathbf{Q}\mathbf{K}^T}{\sqrt{d}}$)} before normalization, as shown in Equation~\ref{eq:attention_with_bias}. 
\begin{equation}
    \label{eq:attention_with_bias}
    \text{Attention}(Q, K, V) = \text{Softmax}\left( \frac{\mathbf{Q}\mathbf{K}^T}{\sqrt{d}} + \text{Bias}_{\text{time}} \right) \mathbf{V}.
\end{equation}
The adaptive temporal bias ($\text{Bias}_{\text{time}}$) is formally defined as:
\begin{equation}
    \label{eq:gaussian_bias}
    \text{Bias}_{\text{time}} = \text{Linear}\left( \exp\left( -\frac{(\Delta t - \mu)^2}{\sigma^2} \right) \right).
\end{equation}
$\Delta t$ denotes the time interval between query and key nodes. Mean $\mu$ and variance $\sigma$ are learnable parameters.
{Temporal long-range perception is critical for relational graph learning \cite{lachi2025integrating}.
The learned $\mu$ serves to identify the most temporally relevant information (i.e., temporal center) within the long-range temporal receptive field of a training node.
Besides, the temporal relevance information decays with increasing distance from the temporal center \cite{GuoZL19}.
Thus, the learned $\sigma$ models the information decay effect, making the temporal bias align with the underlying temporal relevance distribution.

\paragraph{Theoretical Analysis.} In GelGT, the distribution of attention weights is jointly defined by {the temporal center} $\mu$ and scope $\sigma$. 
This parameterized structure achieves a systematic separation between temporally relevant nodes and temporal noise. 
{First, Lemma \ref{lem:conv_center} proves that the learnable parameter $\mu$ converges to the ground-truth relevant timestamp. Based on that, Theorem \ref{theo:strict_attention} further proves that our method assigns a higher attention score to temporally relevant nodes than to noise nodes.}

\begin{lemma}[\textbf{Convergence of Temporal Center}]
\label{lem:conv_center}
Let $t^*$ be the ground-truth relevant timestamp. The gradient of the Gaussian temporal bias $\mathcal{B}(t^*)$ with respect to the learnable temporal center $\mu$ satisfies:
$\frac{\partial \mathcal{B}}{\partial \mu} = \mathcal{B}(t^*) \cdot \frac{t^* - \mu}{\sigma^2}$. 
The term $(t^* - \mu)$ drives $\mu$ toward $t^*$ during optimization.
\end{lemma}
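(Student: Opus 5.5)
The plan is a direct differentiation of the Gaussian kernel underlying the temporal bias, followed by a sign argument for the resulting gradient step. Throughout, $\mathcal{B}(t)$ denotes the Gaussian part of Equation~\ref{eq:gaussian_bias}, taken before the Linear map, evaluated at the time interval $t$.

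\textbf{Normalization of the kernel.} For the derivative to come out exactly in the form stated in Lemma~\ref{lem:conv_center}, the kernel must be $\mathcal{B}(t)=\exp\!\bigl(-\frac{(t-\mu)^2}{2\sigma^2}\bigr)$. With the kernel written literally as in Equation~\ref{eq:gaussian_bias}, i.e.\ with $\sigma^2$ rather than $2\sigma^2$ in the denominator, the derivative carries an extra factor $2$. That factor is a positive constant and can be absorbed into a rescaling of $\sigma$ or of the learning rate, so it does not affect the argument. Any positive scalar weight contributed by the Linear layer can likewise be absorbed into the step size.

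\textbf{Gradient computation.} Apply the chain rule to $\mathcal{B}(t^*)$ with respect to $\mu$:
\begin{equation}
\frac{\partial \mathcal{B}}{\partial \mu}=\mathcal{B}(t^*)\cdot\frac{\partial}{\partial\mu}\Bigl(-\frac{(t^*-\mu)^2}{2\sigma^2}\Bigr)=\mathcal{B}(t^*)\cdot\frac{t^*-\mu}{\sigma^2}.
\end{equation}
This gives the stated identity.

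\textbf{Direction of the update.} Next I would interpret this gradient under gradient ascent on the objective. The premise is that attending to the relevant timestamp $t^*$ reduces the loss, so the loss gradient with respect to the bias at $t^*$ is negative. The update is then
\begin{equation}
\mu\leftarrow\mu+\eta\,\mathcal{B}(t^*)\,\frac{t^*-\mu}{\sigma^2}.
\end{equation}
Since $\eta>0$, $\mathcal{B}(t^*)>0$ and $\sigma^2>0$, the update has the sign of $t^*-\mu$. Hence $\mu$ increases when $\mu<t^*$ and decreases when $\mu>t^*$.

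\textbf{Convergence.} To make ``drives $\mu$ toward $t^*$'' precise, consider the error $e_k=\mu_k-t^*$. One step gives
\begin{equation}
e_{k+1}=\Bigl(1-\frac{\eta\,\mathcal{B}_k}{\sigma^2}\Bigr)e_k,
\end{equation}
where $\mathcal{B}_k\in(0,1]$ is the kernel value at step $k$. If $\eta<\sigma^2$, the contraction factor lies in $[0,1)$, so $|e_k|$ decreases monotonically.

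\textbf{Main obstacle.} The hard part is the convergence claim rather than the derivative. When $\mu$ starts far from $t^*$, $\mathcal{B}_k$ is exponentially small, so the contraction factor can be arbitrarily close to $1$ and progress can stall. I would handle this by bounding $\mathcal{B}_k$ below via $|e_k|\le|e_0|$, which gives $\mathcal{B}_k\ge\exp\!\bigl(-e_0^2/(2\sigma^2)\bigr)$. That yields a uniform contraction factor strictly below $1$, hence geometric convergence of $\mu_k$ to $t^*$. This relies on the assumption that the remaining loss terms do not oppose the sign of this gradient, and I would state that assumption explicitly.
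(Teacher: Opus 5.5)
Your proposal is correct and matches the paper's proof. Both differentiate the kernel in its $2\sigma^2$-normalized form, which the appendix adopts without comment, so your remark about the extra factor $2$ relative to Equation~\ref{eq:gaussian_bias} is apt; both then read off that the gradient-ascent step on $\mu$ has the sign of $t^*-\mu$. Your contraction argument via $\mathcal{B}_k\ge\exp\bigl(-e_0^2/(2\sigma^2)\bigr)$ goes beyond the paper, which only offers an informal ``restoring force'' remark and a contrast with the oscillating gradient of the sinusoidal baseline; it holds as long as $\sigma$ and the effective step size are fixed with $\eta<\sigma^2$.
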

Detailed proof of Lemma \ref{lem:conv_center} is in Appendix \ref{lab:a.4}. Based on the convergence property established in Lemma \ref{lem:conv_center}, {we further quantify the disparities in attention score assignment between temporally relevant nodes and noise nodes, specifically comparing the mechanisms w/ and w/o Gaussian bias.}

{\begin{theorem}[\textbf{Attention Score Ratio Enhancement}]
\label{theo:strict_attention}
Let $\{\alpha^A_{relevant}, \alpha^A_{noise}\}$ denote the attention scores that the attention mechanism $A$ assigned to temporally related nodes and noise nodes, respectively. Then,
\begin{equation}
    \frac{\alpha_{relevant}^\mathrm{w/ \ G}}{\alpha_{noise}^\mathrm{w/ \ G}} = \mathrm{\mathbf{e}} \cdot \frac{\alpha_{relevant}^\text{w/o \ G}}{\alpha_{noise}^\text{w/o \ G}},
\end{equation}
where $\mathrm{\mathbf{e}}$ is Euler's number, ``w/ G'' and ``w/o G'' indicate the attention mechanism with and without the Gaussian bias, respectively.
\end{theorem}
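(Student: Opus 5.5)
Since the theorem is an exact identity rather than an inequality, I will obtain it from the softmax algebra, with idealized assumptions about the Gaussian bias supplied by Lemma~\ref{lem:conv_center}. Fix a query row and write $z_j = \mathbf{q}^\top\mathbf{k}_j/\sqrt{d}$ for the scaled scores. Without the bias, $\alpha_j^{\mathrm{w/o\ G}} = e^{z_j}/\sum_k e^{z_k}$. With the bias, $\alpha_j^{\mathrm{w/\ G}} = e^{z_j + b_j}/\sum_k e^{z_k+b_k}$, where $b_j$ is the bias of Equation~\ref{eq:gaussian_bias} for key $j$. The first step is to note that the normalizers cancel in any ratio of two weights from the same row:
\begin{equation*}
\frac{\alpha^{\mathrm{w/\ G}}_{r}}{\alpha^{\mathrm{w/\ G}}_{n}} = e^{b_r - b_n}\,\frac{e^{z_r}}{e^{z_n}} = e^{b_r-b_n}\,\frac{\alpha^{\mathrm{w/o\ G}}_{r}}{\alpha^{\mathrm{w/o\ G}}_{n}} .
\end{equation*}
This holds provided the underlying $Q,K$ projections are the same in both mechanisms, i.e., we compare at identical $\mathbf{W}_Q,\mathbf{W}_K$. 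The whole claim therefore reduces to showing $b_r - b_n = 1$.

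Next I evaluate the bias at the two node classes. By Lemma~\ref{lem:conv_center}, at convergence $\mu = t^*$. A temporally relevant node has $\Delta t = t^*$, so its Gaussian factor is $\exp(0) = 1$. For a noise node I would formalize ``noise'' as a node lying far from the temporal center, $|\Delta t - \mu| \gg \sigma$, so that its Gaussian factor tends to $0$. The Linear map in Equation~\ref{eq:gaussian_bias} is scalar-to-scalar here, $x\mapsto wx + c$. The offset $c$ cancels in $b_r - b_n$, leaving $b_r - b_n = w\,(1 - 0) = w$. To get exactly $\mathbf{e}$, I take $w=1$, i.e., the identity/unnormalized linear head, or equivalently absorb $w$ into the Gaussian's amplitude. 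I will state this normalization explicitly as an assumption of the theorem.

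The main obstacle is that the result is only exact under these idealizations: $\mu = t^*$ exactly, noise Gaussian value exactly $0$ (a limit as $|\Delta t-\mu|/\sigma\to\infty$), and unit linear weight. Without them one gets only $\exp(w(1 - e^{-(\Delta t_n-\mu)^2/\sigma^2}))$, which for $w>0$ is strictly greater than $1$ and approaches $\mathbf{e}$. So I will present the exact identity in this idealized regime. As a remark, I will add the general strict-enhancement statement, which needs only $w>0$ and $\Delta t_n \neq \mu$, so that the theorem's qualitative conclusion survives without the idealization.
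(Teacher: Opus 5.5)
Your proposal is correct and follows essentially the same route as the paper: the appendix also takes $\mu\approx t^*$ from Lemma~\ref{lem:conv_center}, cancels the softmax normalizer, and passes to the limit $\mathcal{B}(\delta_{relevant})\to 1$, $\mathcal{B}(\delta_{noise})\to 0$, so the logit gap grows by exactly $1$ and gives the factor $\mathrm{\mathbf{e}}$. Your explicit unit-weight assumption on the Linear head matches what the paper does implicitly by adding the raw Gaussian directly to the base logit, and the paper's Bohr almost-periodicity discussion is not needed for the identity; in your closing remark, though, the ratio tends to $e^{w}$, which equals $\mathrm{\mathbf{e}}$ only when $w=1$.
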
}

We provide a detailed proof of Theorem \ref{theo:strict_attention} in Appendix \ref{lab:a.5}. Theorem \ref{theo:strict_attention} demonstrates that, compared to vanilla self-attention \cite{VaswaniSPUJGKP17}, our attention mechanism achieves a higher {attention score of assigning to temporally relevant nodes} relative to noise nodes. Consequently, our approach more effectively discriminates between temporally relevant information and temporal noise. 

\section{Experiments}
\label{experiment}
In this section, we conduct extensive experiments to answer the following research questions: \\
\textbf{RQ1}: Does GelGT outperform existing methods on node classification tasks?\\
\textbf{RQ2}: Does GelGT outperform existing methods on node regression tasks? \\
\textbf{RQ3}: How effective are proposed individual components?\\
\textbf{RQ4}: Do the proposed modules generalize to other baseline frameworks? \\
\textbf{RQ5}: Why is GNN branch necessary in GelGT? \\
\textbf{RQ6}: How does sampling size affect GelGT's performance? \\
\textbf{RQ7}: How sensitive is GelGT to the number of hops?\\
\textbf{RQ8}: How does GelGT compare to baseline methods
in terms of training and inference time?

\subsection{Experimental Setup}
\paragraph{Datasets} GelGT is evaluated on the recently introduced RDL Benchmark (RelBench) \cite{0001RHHHDFLYZHL24}. In total, RelBench contains 30 tasks across 7 datasets, covering node classification, node regression, and recommendation. For our evaluation, we focus on 21 tasks on node classification and regression, {following the task settings used in RelGT}~\cite{dwivedi2026relational}.

\paragraph{Baselines} We compare GelGT with six baseline methods for node classification and node regression tasks (Tables \ref{tab:tab.1} and \ref{tab:tab.2}). 
Our baselines include graph transformer-based method RelGT \cite{dwivedi2026relational}, as well as the {graph-centric relational database foundation model} Griffin \cite{WangWGWYWZ25}. 
Consistent with the experimental settings in prior work \cite{dwivedi2026relational}, we also evaluate two variants of the Heterogeneous Graph Transformer (HGT) \cite{HuDWS20} as additional baselines to further examine the effectiveness of GelGT relative to existing graph transformer models. 
In addition, we compare against a strong baseline from the tabular learning literature, namely LightGBM \cite{KeMFWCMYL17}.

\paragraph{Evaluation Metrics} Node-level classification aims to predict binary labels for a given node at
a specific seed time. 
We use the Area Under the ROC Curve (AUC) \cite{1983A} as the evaluation metric (higher is better). 
Node-level regression involves predicting numerical labels for a node at a given seed time. We use Mean Absolute Error (MAE) as the evaluation metric (lower is better).

\begin{table*}[htb]
    \centering
     \caption{Test set results on the node classification tasks in RelBench \cite{0001RHHHDFLYZHL24}. Best values (based on mean performance) are in \textbf{bold}, and the second-best values are \underline{underlined}. Results are reproduced based on available code. $\uparrow$: higher is better.}%\renewcommand\arraystretch{0.7}
    \begin{adjustbox}{width=\textwidth}
    
    \begin{tabular}{lcccccccc}
  
	\toprule
	\bf Dataset & \bf Task & \bf RDL & \bf HGT & \bf HGT+PE & \bf Griffin & \bf RelGT & \bf LightGBM & \bf Ours\\
	\midrule

    rel-f1 & driver-dnf ($\uparrow$) & 0.7262$\pm$0.0027 & 0.7077$\pm$0.0153 & 0.7117$\pm$0.0084 & 0.7091$\pm$0.0045  & \underline{0.7587$\pm$0.0413} & 0.6886$\pm$0.0034 & \textbf{0.7608$\pm$0.0175} \\
	rel-f1 & driver-top3 ($\uparrow$) & 0.7554$\pm$0.0154 & 0.7765$\pm$0.0066 & 0.7340$\pm$0.0018 & 0.7795$\pm$0.0139 & \underline{0.8352$\pm$0.0342} & 0.7393$\pm$0.0121  & \textbf{0.8408$\pm$0.0063 }\\
	rel-avito & user-clicks ($\uparrow$) & 0.6590$\pm$0.0195 & 0.6376$\pm$0.0298 & 0.6457$\pm$0.0099 & 0.6330$\pm$0.0025 & \underline{0.6830$\pm$0.0602} & 0.5360$\pm$0.0032 & \textbf{0.6844$\pm$0.0600} \\
	rel-avito & user-visits ($\uparrow$) & 0.6620$\pm$0.0010 & 0.6432$\pm$0.0002 & 0.6495$\pm$0.0022 & 0.6468$\pm$0.0045 & \underline{0.6678$\pm$0.0015} & 0.5305$\pm$0.0011 & \textbf{0.6695$\pm$0.0060}\\
	
	rel-event & user-repeat ($\uparrow$) & 0.7689$\pm$0.0159 & 0.6496$\pm$0.0220 & 0.6536$\pm$0.0137 & \underline{0.7709$\pm$0.0062} & 0.7609$\pm$0.0219 & 0.5305$\pm$0.0211  & \textbf{0.8357$\pm$0.0203} \\
	rel-event & user-ignore ($\uparrow$) & 0.8162$\pm$0.0111 & \underline{0.8247$\pm$0.0096} & 0.8161$\pm$0.0007 & 0.8153$\pm$0.0038 & 0.8157$\pm$0.0040 & 0.7993$\pm$0.0154  & \textbf{0.8779$\pm$0.0122} \\
	rel-trial & study-outcome ($\uparrow$) & 0.6860$\pm$0.0101 & 0.5837$\pm$0.0141 & 0.5921$\pm$0.0303 & 0.6908$\pm$0.0071 & 0.6861$\pm$0.0040 & \underline{0.7009$\pm$0.0063}  & \textbf{0.7254$\pm$0.0059} \\
	rel-amazon & user-churn ($\uparrow$) & \underline{0.7042$\pm$0.0005} & 0.6643$\pm$0.0041 & 0.6619$\pm$0.0042 & 0.7000$\pm$0.0036 & 0.7039$\pm$0.0040 & 0.5222$\pm$0.0051 & \textbf{0.7050$\pm$0.0065} \\
	rel-amazon & item-churn ($\uparrow$) & \underline{0.8281$\pm$0.0003} & 0.7797$\pm$0.0039 & 0.7803$\pm$0.0053 & 0.8110$\pm$0.0051 & 0.8255$\pm$0.0006 & 0.6254$\pm$0.0044  & \textbf{0.8297$\pm$0.0096} \\
	rel-stack & user-engagement ($\uparrow$) & 0.9021$\pm$0.0007 & 0.8847$\pm$0.0044 & 0.8817$\pm$0.0046 & 0.8980$\pm$0.0023 & \underline{0.9053$\pm$0.0005} & 0.6339$\pm$0.0350 & \textbf{0.9086$\pm$0.0186} \\
	rel-stack & user-badge ($\uparrow$) & \underline{0.8986$\pm$0.0008} & 0.8608$\pm$0.0044 & 0.8566$\pm$0.0068 & 0.8700$\pm$0.0031 & 0.8632$\pm$0.0018 & 0.6343$\pm$0.0079  & \textbf{0.9044$\pm$0.0086} \\
	rel-hm & user-churn ($\uparrow$) & \underline{0.6988$\pm$0.0021} & 0.6695$\pm$0.0067 & 0.6569$\pm$0.0109 & 0.6804$\pm$0.0014 & 0.6927$\pm$0.0019 & 0.5521$\pm$0.0025  & \textbf{0.6997$\pm$0.0017} \\
    
    \bottomrule
    \end{tabular}
    \end{adjustbox}

  \label{tab:tab.1}
\end{table*}

\begin{table*}[htb]
    \centering
     \caption{Test set results on the node regression tasks in RelBench \cite{0001RHHHDFLYZHL24}. Best values (based on mean performance) are in \textbf{bold}, and the second-best values are \underline{underlined}. Results are reproduced based on available code. Results marked with $^\dagger$ are reproduced using the official code and differ from the original paper. $\downarrow$: lower is better.}%\renewcommand\arraystretch{0.7}
    \begin{adjustbox}{width=\textwidth}
    
    \begin{tabular}{lcccccccc}
  
	\toprule
	\bf Dataset & \bf Task & \bf RDL  & \bf HGT & \bf HGT+PE & \bf Griffin  & \bf RelGT  & \bf LightGBM  & \bf Ours\\
	\midrule

    rel-f1 & driver-position ($\downarrow$) & 4.022$\pm$0.119 & 4.2263$\pm$0.0580 & 4.3921$\pm$0.1382 & \underline{3.9315$\pm$0.2903} & 3.9170$\pm$0.3448 $^\dagger$ & 4.1700$\pm$0.2200& \textbf{3.7345$\pm$0.1200} \\
	rel-avito & ad-ctr ($\downarrow$) & \underline{0.0410$\pm$0.0010} & 0.0462$\pm$0.0021 & 0.0483$\pm$0.0027 & 0.0433$\pm$0.0004 & 0.0420$\pm$0.0009 $^\dagger$ & 0.0410 $\pm$0.0041 & \textbf{0.0362$\pm$0.0009} \\
	
	rel-event & user-attendance ($\downarrow$) & 0.258$\pm$0.006 & 0.2635$\pm$0.0000 & 0.2611$\pm$0.0043 & 0.3708$\pm$0.0094 & \underline{0.2502$\pm$0.0033} & 0.2640$\pm$0.0011 & \textbf{0.2423$\pm$0.0031} \\
	rel-trial & study-adverse ($\downarrow$) & 44.473$\pm$0.209 & 45.1692$\pm$2.6927 & \underline{42.6484$\pm$0.2785} & 59.0902$\pm$0.0022 & 43.9923$\pm$0.5928 & 44.0110$\pm$0.2020 & \textbf{42.5699$\pm$0.9700} \\
    rel-trial & site-success ($\downarrow$) & 0.4000$\pm$0.0200 & 0.4428$\pm$0.0047 & 0.4396$\pm$0.0083 & 0.3801$\pm$0.0041 & \underline{0.3761$\pm$0.0178} $^\dagger$ & 0.4250$\pm$0.0008 & \textbf{0.3485$\pm$0.0200} \\
	rel-amazon & user-ltv ($\downarrow$) & 14.3130$\pm$0.013 & 15.4120$\pm$0.0447 & 15.8643$\pm$0.0924 & 19.6055$\pm$0.0239 & \underline{14.2665$\pm$0.0154} & 16.7830$\pm$0.0188 & \textbf{14.2538$\pm$0.0189} \\
	rel-amazon & item-ltv ($\downarrow$) & \underline{50.0530$\pm$0.1630} & 55.8683$\pm$0.6003 & 55.8493$\pm$0.3226 & 66.7938$\pm$0.0193 & 54.6626$\pm$0.7006 $^\dagger$ & 60.5690$\pm$0.0033 & \textbf{49.2109$\pm$0.8092} \\
	rel-stack & user-votes ($\downarrow$) & \underline{0.0650$\pm$0.0000} & 0.0679$\pm$0.0000 & 0.0680$\pm$0.0000 & 0.1395$\pm$0.0076 & 0.0654$\pm$0.0002 & 0.0680$\pm$0.0065 & \textbf{0.0648$\pm$0.0025} \\
	rel-hm & item-sales ($\downarrow$) & \textbf{0.0560$\pm$0.0000} & 0.0641$\pm$0.0012 & 0.0639$\pm$0.0003 & 0.0597$\pm$0.0008 & \underline{0.0563$\pm$0.0006} $^\dagger$ & 0.0760$\pm$0.0002 & \textbf{0.0560$\pm$0.0009} \\
    
    \bottomrule
    \end{tabular}
    \end{adjustbox}

  \label{tab:tab.2}
  % \vspace{-1.5em}
\end{table*}

\subsection{Performance}
To answer (\textbf{RQ1}), we evaluate GelGT on 12 node classification tasks across 7 datasets. The results in Table \ref{tab:tab.1} demonstrate the strong performance of GelGT compared to baseline methods. Specifically, GelGT surpasses baseline methods by up to \textbf{6.2}\% improvement in AUC. It ranks first in AUC on every evaluated task, indicating its strong ability to effectively distinguish positive nodes from negative ones. To answer (\textbf{RQ2}), we evaluate GelGT on 9 node regression tasks across 7 widely utilized datasets. Table \ref{tab:tab.2} demonstrates the best performance of GelGT compared to baselines, achieving up to a \textbf{13.8}\% improvement in MAE. More experimental results have been included in the Appendix \ref{sec:apadditionalbaseline}.

\begin{table*}[t]
    \centering
    \caption{Ablation study of GelGT components on node classification tasks. $\uparrow$: higher is better.}
    \label{tab:tab.3}
    
    \renewcommand{\arraystretch}{1.0} 
    \setlength{\tabcolsep}{2pt} 
    
    \begin{adjustbox}{width=\textwidth}
    \begin{tabular}{lccccc}
    \toprule
    \bf Dataset & \bf Task & \bf GelGT (Ours) & \bf w/o Structural Integrity Sampling & \bf w/o Semantic Refinement & \bf w/o Adaptive Gaussian Bias \\
    \midrule
    rel-avito & user-clicks ($\uparrow$)& {0.6844$\pm$0.0600} & {0.6688$\pm$0.0057} & 0.6677$\pm$0.0031 & 0.6669$\pm$0.0043 \\
    rel-avito & user-visits ($\uparrow$)& {0.6695$\pm$0.0060} & 0.6612$\pm$0.0018 & {0.6652$\pm$0.0040} & 0.6630$\pm$0.0028 \\
    rel-event & user-ignore ($\uparrow$)& {0.8779$\pm$0.0122} & 0.8691$\pm$0.0073 & {0.8699$\pm$0.0086} & 0.8638$\pm$0.0091 \\
    rel-trial & study-outcome ($\uparrow$)& {0.7254$\pm$0.0059} & 0.7161$\pm$0.0060 & 0.7155$\pm$0.0040 & {0.7192$\pm$0.0076} \\
    rel-amazon & user-churn ($\uparrow$)& {0.7050$\pm$0.0065} & 0.6812$\pm$0.0080 & {0.6973$\pm$0.0095} & 0.6945$\pm$0.0089 \\
    \bottomrule
    \end{tabular}
    \end{adjustbox}
\end{table*}%
\begin{table*}[!t]
    \centering
    \caption{Ablation study of GelGT components on node regression tasks. $\downarrow$: lower is better.}
    \label{tab:tab.4}
    
    \renewcommand{\arraystretch}{1.0} 
    \setlength{\tabcolsep}{2pt} 
    
    \begin{adjustbox}{width=\textwidth}
    \begin{tabular}{lccccc}
    \toprule
    \bf Dataset & \bf Task & \bf GelGT (Ours) & \bf w/o Structural Integrity Sampling & \bf w/o Semantic Refinement & \bf w/o Adaptive Gaussian Bias \\
    \midrule
	rel-avito & ad-ctr ($\downarrow$) & 0.0362±0.0009 & 0.0372±0.0009 & 0.0368±0.0007 & 0.0370±0.0004 \\
	rel-trial & site-success ($\downarrow$) & 0.3485±0.0200 & 0.3909±0.0048 & 0.3491±0.0178 & 0.3583±0.0034 \\
	rel-hm & item-sales ($\downarrow$) & 0.0560±0.0009 & 0.0584±0.0021 & 0.0599±0.0014 & 0.0585±0.0008 \\
    \bottomrule
    \end{tabular}
    \end{adjustbox}
\end{table*}

\subsection{Ablation Study}
To address (\textbf{RQ3}), we perform an ablation study by removing key components of GelGT:
\textbf{(1) w/o Structural Integrity Sampling}: We use random sampling instead. This does not preserve the topological structure of the relational graph.
\textbf{(2) w/o Semantic Refinement}: We retain all sampled nodes, including the irrelevant ones.
\textbf{(3) w/o Adaptive Gaussian Temporal Bias}: We remove the adaptive Gaussian temporal bias, reducing the attention mechanism to vanilla self-attention \cite{VaswaniSPUJGKP17}.
Tables \ref{tab:tab.3} and \ref{tab:tab.4} present the ablation study results for the node classification and node regression tasks, respectively. In Tables \ref{tab:tab.3} and \ref{tab:tab.4}, removing structural integrity sampling degrades performance, confirming the necessity of the topological structure preserved by our sampling strategy. Eliminating semantic refinement leads to a clear drop, demonstrating that GelGT effectively removes irrelevant information. Finally, excluding the adaptive Gaussian temporal bias results in the observable performance loss, indicating that simply encoding time as features is insufficient.
% , while GelGT captures temporal dependencies more effectively.

To answer (\textbf{RQ4}), we evaluate the performance of RelGT enhanced with our proposed structural-semantic collaborative sampling and Gaussian temporal bias across five different tasks to validate their effectiveness. As shown in Figure \ref{fig:relgtcompare}, this uniform performance gain across diverse tasks demonstrates that our approach effectively addresses the limitations of the original framework. In summary, the results demonstrate that our proposed modules effectively generalize to other baseline frameworks, significantly enhancing their predictive performance. In the Appendix \ref{sampleing}, we further demonstrate through experiments that the proposed sampling strategy is model-agnostic.

To address (\textbf{RQ5}), we conduct an ablation study by removing the GNN branch. As shown in the table \ref{tab:gnn_ablation}, performance consistently drops across all datasets. This validates the necessity of the GNN branch in GelGT. The attention branch, even with hop-distance embeddings and GNN-based positional encodings, operates on a fully connected sampled subgraph and therefore only captures relative positional information, without explicitly modeling the true PK-FK edges. In contrast, the GNN branch performs message passing directly along PK-FK edges, which preserves the original relational structure and better captures the semantic dependencies encoded in these edges.

\begin{table*}[t]
    \centering
    \caption{Effect of the GNN branch on node classification tasks. $\uparrow$: higher is better.}
    \label{tab:gnn_ablation}
    
    \renewcommand{\arraystretch}{1.2} 
    \setlength{\tabcolsep}{6pt} 
    
    \begin{adjustbox}{width=\textwidth}
    \begin{tabular}{lccccccc}
    \toprule
    \bf Metric / Dataset & rel-f1 & rel-avito & rel-event & rel-trial & rel-amazon & rel-stack & rel-hm \\
    \midrule
    Task & driver-dnf & user-clicks & user-repeat & study-outcome & item-churn & user-badge & user-churn \\
    \midrule
    w/o GNN branch & 0.7499 & 0.6751 & 0.8173 & 0.7094 & 0.8128 & 0.8917 & 0.6822 \\
    w/ GNN branch  & \bf 0.7608 & \bf 0.6844 & \bf 0.8357 & \bf 0.7254 & \bf 0.8297 & \bf 0.9044 & \bf 0.6997 \\
    \bottomrule
    \end{tabular}
    \end{adjustbox}
\end{table*}

\subsection{Sampling Size Analysis}
To address (\textbf{RQ6}), we evaluate the performance of GelGT across 5 tasks under different sampling sizes. Specifically, we vary the number of sampled nodes within $\{100, 200, 300, 400, 500\}$. In Figure \ref{fig:sample_size}, the performance across all tasks reaches its peak at a sampling size of 300. Specifically, when the sampling size is less than 300, the model suffers from an information bottleneck and fails to capture sufficient semantic information. Conversely, when the sampling size exceeds 300, 
the limited number of truly relevant neighbors forces the similarity-based mechanism to incorporate semantically irrelevant nodes, which obscures the contribution of semantically relevant nodes. Therefore, we set the sampling size to 300, as it ensures sufficient information aggregation while avoiding interference from semantically irrelevant nodes. In the Appendix \ref{sampling budget}, we further analyze the sampling budget.

\subsection{Sampling Hop Analysis}
To answer (\textbf{RQ7}), we evaluate the performance of GelGT across five tasks under different sampling hops. Specifically, we vary the sampling hops from $\{2, 3, 4, 5\}$. The experimental results are summarized in Figure \ref{fig:sample_hop}. As shown in the results, the performance remains consistent across all tasks with minimal fluctuations, indicating that GelGT is insensitive to the sampling hops parameter. 
Such stability across varying hop counts carries significant practical implications. First, the capability to attain high performance with a small receptive field (i.e., $\text{Sampling Hops}=2$) demonstrates GelGT's efficiency in extracting necessary predictive information from a compact local neighborhood, thereby avoiding the computational overhead associated with deeper sampling. Second, the absence of performance degradation at larger hops (e.g., $\text{Sampling Hops}=5$) validates GelGT's ability to effectively filter out irrelevant noise from distant nodes. More analysis is in the Appendix \ref{refinement desigh}.
\begin{figure}[t]
    \centering
    \begin{subfigure}[t]{0.42\columnwidth}
        \centering
        \includegraphics[width=\linewidth]{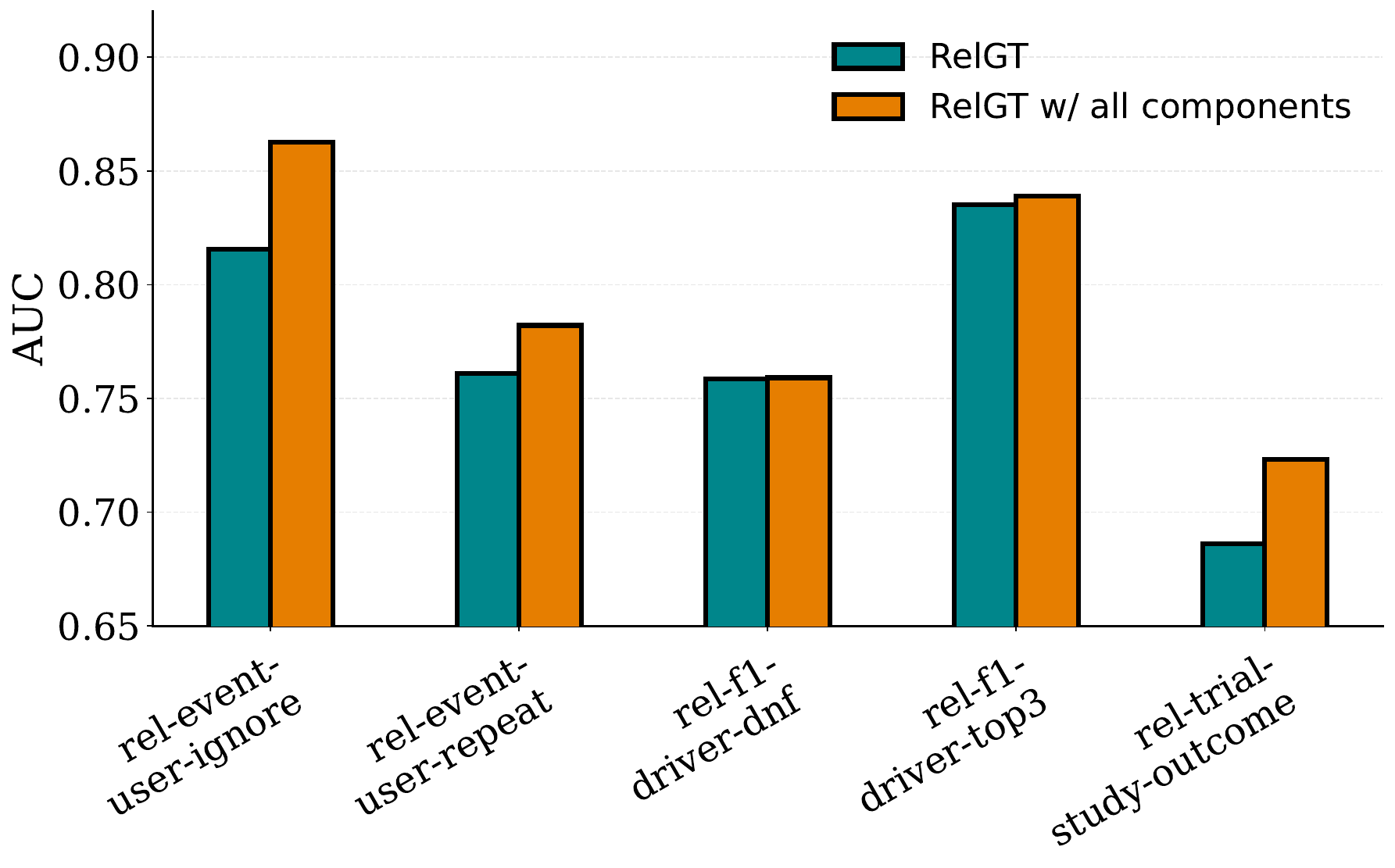}
        \caption{RelGT w/ all GelGT components vs. RelGT}
        \label{fig:relgtcompare}
    \end{subfigure}
    \hfill
    \begin{subfigure}[t]{0.55\columnwidth}
        \centering
        \includegraphics[width=\linewidth]{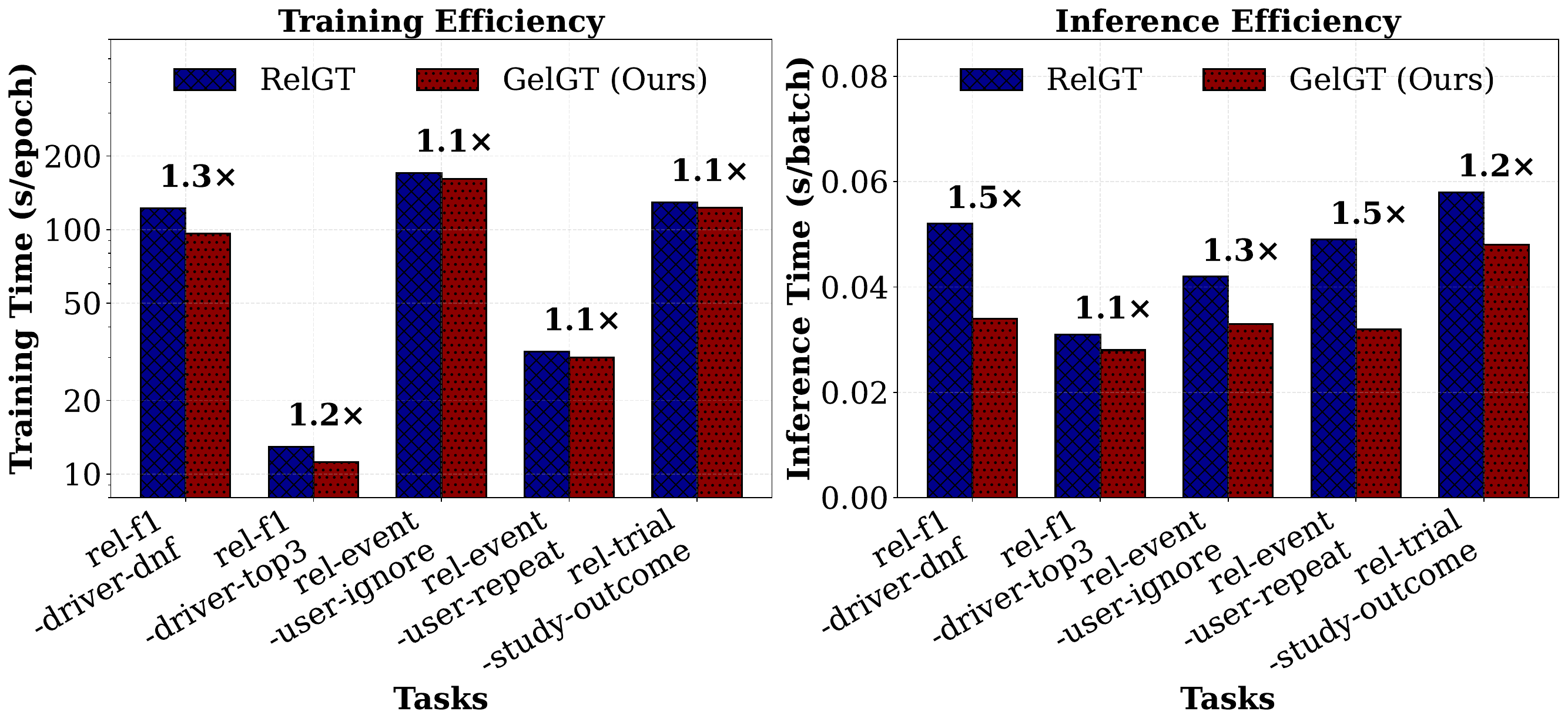}
        \caption{Efficiency comparison between RelGT and GelGT}
        \label{fig:efficiency}
    \end{subfigure}

    \caption{Ablation and efficiency evaluation.}
    \label{fig:combined_efficiency}
\end{figure}
\begin{figure}[!t]
    \centering
    \begin{subfigure}[t]{0.48\columnwidth}
        \centering
        \includegraphics[width=\linewidth]{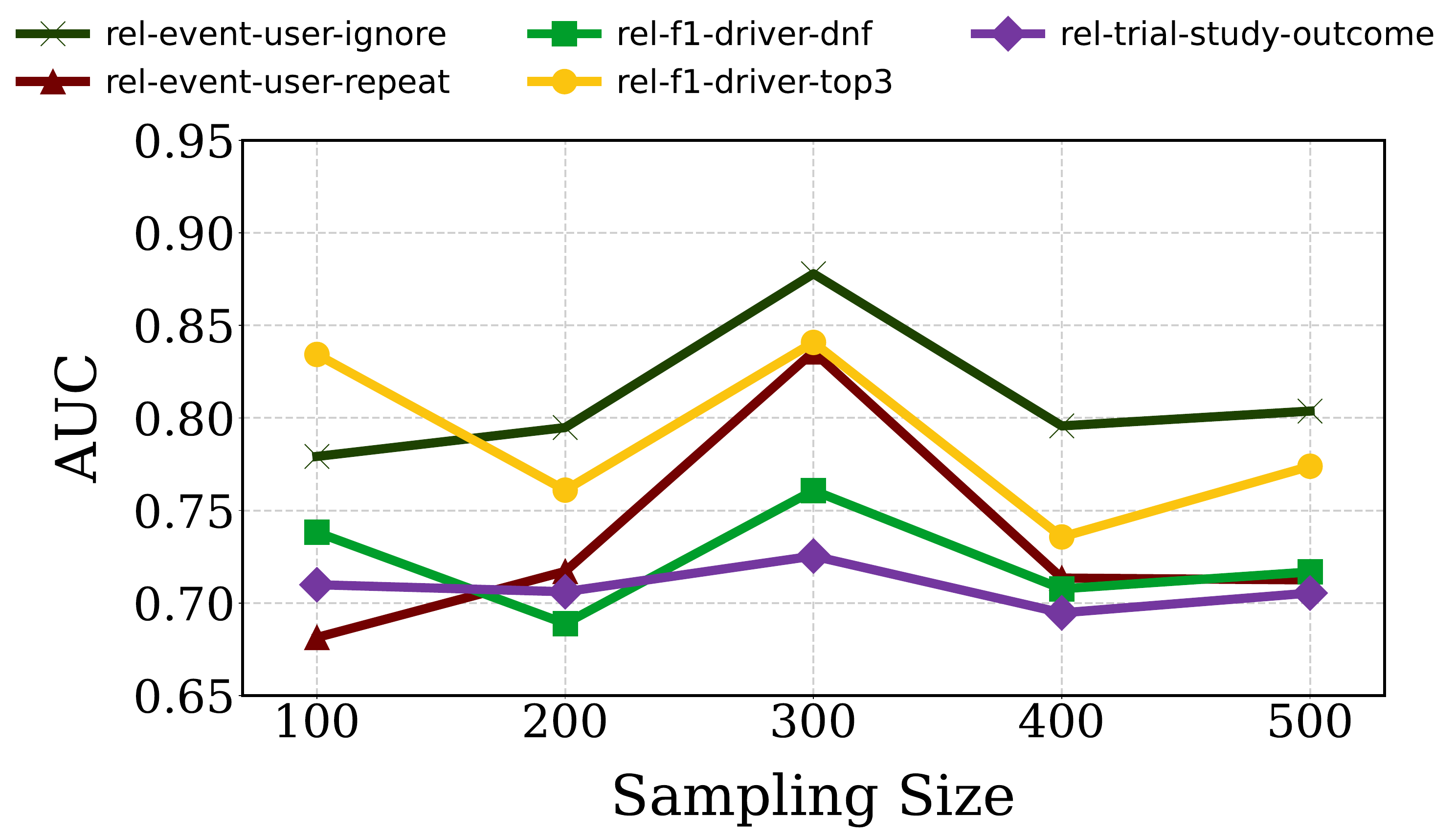}
        \caption{Sampling size}
        \label{fig:sample_size}
    \end{subfigure}
    \hfill
    \begin{subfigure}[t]{0.48\columnwidth}
        \centering
        \includegraphics[width=\linewidth]{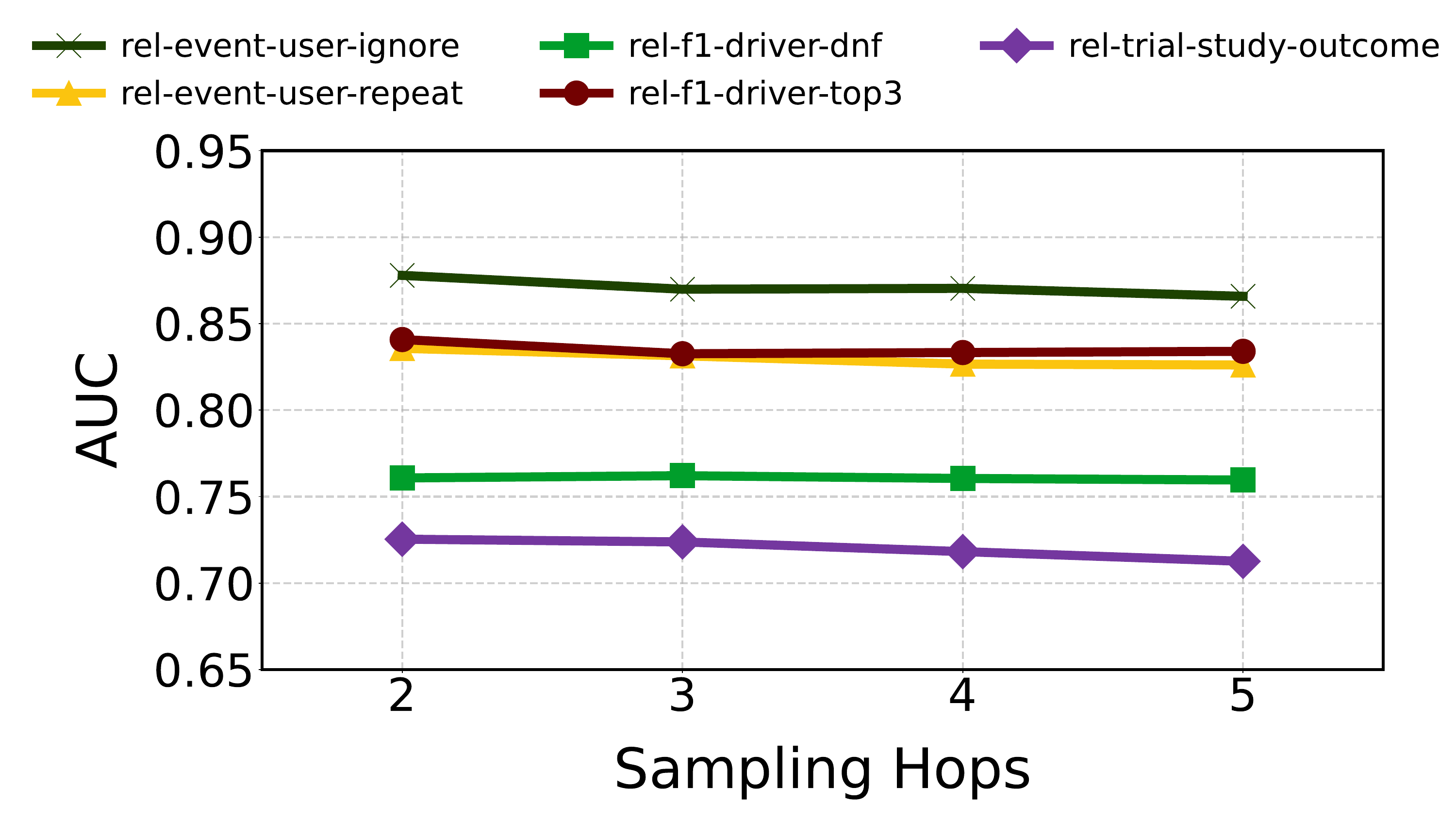}
        \caption{Sampling hops}
        \label{fig:sample_hop}
    \end{subfigure}
    \caption{GelGT's Performance across five tasks under different sampling sizes and sampling hops.}
    \label{fig:sampling_ablation}
    \vspace{-10pt}
\end{figure}

\subsection{Efficiency.}
Beyond predictive performance, computational efficiency is a critical factor for scalability of relational graph learning in real-world applications. To address (\textbf{RQ8}), we evaluate the efficiency of GelGT across five tasks by comparing its training time per epoch and inference time against state-of-the-art relational graph transformer models. As shown in Figure \ref{fig:efficiency}, GelGT consistently demonstrates lower time costs for both training and inference across all five evaluated tasks compared to baseline methods. This uniform reduction in computational overhead confirms that GelGT is more efficient and scalable for relational graph learning tasks. More experiments are in the Appendix \ref{efficiency}.

\section{Related Work}

\paragraph{\textbf{Traditional Relational Learning}.}
% Tabular learning methods, which focus on predictive tasks over tabular data
Relational learning on relational databases was heavily reliant on manual feature engineering, requiring domain experts to transform multi-table records into flat feature vectors to accommodate tabular learning methods \cite{dwivedi2026relational}, e.g., LightGBM \cite{KeMFWCMYL17}, TabNet \cite{arik2021tabnet} and DNF-Net \cite{katzir2020net}. Specifically, TabNet treats attributes from distinct tables merely as a homogeneous feature set, while DNF-Net models interactions solely as property combinations within a single row. 
However, these methods will break the relationships between tables in the RDB. This reduction of hierarchical tables to flat vectors transforms \textit{Primary Key-Foreign Key} connections from explicit referential links to disconnected feature columns, effectively obscuring the hierarchical dependencies between tables.
Relational graph learning aims to address these challenges~\cite{abs-2002-02046, FeyHHLR0YYL24}.

\paragraph{\textbf{Relational Graph Learning (RGL)}.}
In RGL, the relational database is modeled as a relational graph, which effectively captures the relational dependencies. 
Most RGL methods are built upon the message-passing-based mechanisms, utilizing message-passing mechanisms to propagate information across tables. For example, RelGNN \cite{ChenKL25} and RDL~\cite{0001RHHHDFLYZHL24} adopt an iterative message-passing mechanism where row-level representations are updated by recursively aggregating transformed feature messages from connected neighbors across tables via foreign keys. 
However, these methods struggle to capture such long-range dependencies effectively \cite{WuJWMGS21, WuZLWY22}, limiting their ability to model complex and multi-hop relationships for downstream predictive tasks.
Therefore, existing methods shift to graph Transformer-based architectures, including RelGT~\cite{dwivedi2026relational}, Griffin~\cite{WangWGWYWZ25} and HGT \cite{HuDWS20}.
Nevertheless, current approaches fall short of addressing the complex structural, semantic, and temporal demands of RGL.
For example, RelGT relies on random sampling, which disregards explicit connections, leading to structural fragmentation and degrading predictive performance. 
Although Griffin \cite{WangWGWYWZ25} preserves all connected nodes, 
such full retention is often impractical in real-world databases, where a single row can be associated with thousands of records \cite{ChenKL25}. By indiscriminately incorporating such massive neighborhoods without semantic filtering, it inevitably introduce semantic noise \cite{WangJSWYCY19}.
In addition, they commonly flatten dynamic temporal pattern into static latent representations, ignoring the intrinsic temporal pattern and lacking the capacity to capture precise temporal dependencies.
Our GelGT explicitly addresses the above challenges, achieving state-of-the-art performance.

\section{Conclusion}
We propose GelGT, a Gaussian relational graph transformer for relational graph learning. GelGT jointly addresses structural fragmentation, semantic irrelevance, and temporal dependency modeling through structure-semantic collaborative sampling and adaptive Gaussian-biased attention. 
Theoretical analysis validates the effectiveness of the proposed mechanisms, and extensive experiments demonstrate that GelGT outperforms baselines in both predictive accuracy and training efficiency.

\bibliography{example_paper}
\bibliographystyle{unsrt}

\newpage
\appendix

\section*{Appendix Overview}
In the Appendix, we provide additional details organized as follows:
\begin{enumerate}
    \item Appendix \ref{sec:approofoftheorem}: Proofs of Theorems.
    \item Appendix \ref{sec:apexpdetailes}: Experimental Details.
    \item Appendix \ref{sec:apadditionalbaseline}: Additional Baselines and Datasets.
    \item Appendix \ref{efficiency}: Additional Efficiency Experiments.
    \item Appendix \ref{lab:encoder}: Detailed Description of Encoder Modules.
    \item Appendix \ref{sec:apanalysis}: Additional Analysis and Discussion. 
    \item Appendix \ref{code}: Reproducibility and Code Availability Statement.
    \item Appendix \ref{impact}: Limitations and Broader Impacts.
\end{enumerate}

% \clearpage
\section{Proofs of Theorems.}
\label{sec:approofoftheorem}
\subsection{Upper Bound of Relative Structural Loss}
\label{lab:a.1}
\emph{Proof.} To characterize the multi-hop topological structure of nodes in the graph, we employ Katz centrality \cite{katz1953new} as our analytical tool. As a classic walk-based topological metric, Katz centrality systematically captures the structural reach of a node by computing a weighted sum of all walks of varying lengths associated with it. 
Formally, let $A$ be the adjacency matrix of the graph, and $\lambda$ be the attenuation factor determining the weight decay of paths. Let $\mathbf{C}^{\mathcal{G}}$ denote the vector of Katz centrality for all nodes in the full graph, and let $\mathbf{1}$ be the vector of all ones. The matrix-vector form of Katz centrality is given by the Neumann series:
\begin{equation}
    \mathbf{C}^{\mathcal{G}} = \sum_{k=1}^{\infty} (\lambda A)^k \mathbf{1}.
\end{equation}

The structural integrity sampling strategy preserves all walks of length $k=1$ and $k=2$. The structural loss vector $\boldsymbol{\Delta C}$, which consists of all walks of length $k \geq 3$, can be expressed as:
\begin{equation}
    \boldsymbol{\Delta C} = \sum_{k=3}^{\infty} (\lambda A)^k \mathbf{1}.
\end{equation}

We observe a critical recursive relationship between the error vector and the global centrality vector. By factoring out $(\lambda A)^2$ from the error series:
\begin{equation}
    \boldsymbol{\Delta C} = (\lambda A)^2 \left( \sum_{k=1}^{\infty} (\lambda A)^k \mathbf{1} \right) = (\lambda A)^2 \mathbf{C}^{\mathcal{G}}.
\end{equation}

To quantify the relative magnitude of the loss, we employ the matrix 2-norm (spectral norm). Using the consistency property of matrix norms ($\|M\mathbf{x}\| \leq \|M\| \|\mathbf{x}\|$), we have:
\begin{equation}
    \| \boldsymbol{\Delta C} \|_2 = \| (\lambda A)^2 \mathbf{C}^{\mathcal{G}} \|_2 \leq \| (\lambda A)^2 \|_2 \| \mathbf{C}^{\mathcal{G}} \|_2.
\end{equation}

Using the property that the spectral norm of a symmetric matrix is its spectral radius, i.e., $\|A\|_2 = \rho(A)$, and considering the definition $\lambda = c / \rho(A)$:
\begin{equation}
    \| (\lambda A)^2 \|_2 \leq (\lambda \|A\|_2)^2 = (\lambda \rho(A))^2 = \left( \frac{c}{\rho(A)} \cdot \rho(A) \right)^2 = c^2.
\end{equation}

Substituting this bound back into the inequality yields:
\begin{equation}
    \| \boldsymbol{\Delta C} \|_2 \leq c^2 \| \mathbf{C}^{\mathcal{G}} \|_2.
\end{equation}

Rearranging the terms, we obtain the strict upper bound for the relative structural loss:
\begin{equation}
    \frac{\| \boldsymbol{\Delta C} \|_2}{\| \mathbf{C}^{\mathcal{G}} \|_2} \leq c^2.
\end{equation}

Following the analysis of the local regime by Benzi and Klymko \cite{benzi2013limiting}, the scaling factor $c$ falls within the range of $\leq 0.1$. 

Substituting this value into Theorem \ref{lab:them1}, the relative structural loss is upper-bounded by:
\begin{equation}
    \frac{\| \boldsymbol{\Delta C} \|_2}{\| \mathbf{C}^{\mathcal{G}} \|_2} \leq (0.1)^2 = 1\%.
\end{equation}
This confirms that under standard operating conditions, the structural integrity sampling preserves at least 99\% of the effective structural information.

\subsection{Exponential Decay of Structural Sensitivity}
\emph{proof.}
\label{lab:lemma3}
To justify the differentiated pruning strategy in Stage 2, we analyze the sensitivity of the seed node's structural information to the removal of a specific neighbor at distance $k$. We define the structural sensitivity $\Delta \mathcal{C}(v, k)$ as the magnitude of change in the Katz centrality of $v$ when a node $u$ at distance $k$ (i.e., $\text{dist}(v, u) = k$) is removed from the graph.

Based on the definition of Katz centrality, the contribution of a node $u$ to $v$ is the weighted sum of all walks between them:
\begin{equation}
    \text{Contr}(u \to v) = \sum_{\gamma: u \rightsquigarrow v} \beta^{|\gamma|},
\end{equation}
where $\gamma$ represents a walk from $u$ to $v$, $\beta \in (0, 1)$ is the attenuation factor determining the weight decay of paths and $|\gamma|$ is its length.

When a neighbor $u$ at distance $k$ is pruned, all walks passing through $u$ are eliminated. The dominant term in this summation corresponds to the shortest path (geodesic) of length $k$. Higher-order paths (length $>k$) also contribute, but their weights are dampened by higher powers of $\beta$. Therefore, the structural loss is dominated by the leading term:
\begin{equation}
    \Delta \mathcal{C}_{k-hop}(v) \approx \beta^k \cdot N_{\text{paths}}(k) + \mathcal{O}(\beta^{k+1}),
\end{equation}
where $N_{\text{paths}}(k)$ denotes the number of shortest paths of length $k$ between $v$ and $u$ (typically small for local neighborhoods).

Now, we compare the sensitivity at $k=1$ (direct neighbors) and $k=2$ (two-hop neighbors):
\begin{align}
    \Delta \mathcal{C}_{1-hop}(v) &= \beta \cdot 1 + \mathcal{O}(\beta^2) \sim \mathcal{O}(\beta), \\
    \Delta \mathcal{C}_{2-hop}(v) &= \beta^2 \cdot N_{\text{paths}}(2) + \mathcal{O}(\beta^3) \sim \mathcal{O}(\beta^2).
\end{align}
Note that for $k=1$, there is exactly one direct edge, so the coefficient is 1.

The ratio of structural sensitivity between pruning a 2-hop neighbor and a 1-hop neighbor is:
\begin{equation}
    \frac{\Delta \mathcal{C}_{2-hop}(v)}{\Delta \mathcal{C}_{1-hop}(v)} \approx \frac{\mathcal{O}(\beta^2)}{\mathcal{O}(\beta)} = \mathcal{O}(\beta).
\end{equation}

Since the attenuation factor $\beta$ is typically chosen to be small (satisfying $\beta < 1/\rho(A)$), this ratio indicates that the structural perturbation caused by removing a 2-hop neighbor is an order of magnitude smaller than that of removing a 1-hop neighbor. This theoretical result confirms that 1-hop neighbors form the non-negligible structural skeleton ($\mathcal{O}(\beta)$ sensitivity), whereas 2-hop neighbors allow for aggressive semantic pruning with bounded structural risk ($\mathcal{O}(\beta^2)$ sensitivity). \qed

\subsection{Enhancement of Semantic Relevance-to-Noise Ratio}
\label{a.3}
As we prove in Appendix \ref{lab:lemma3}, pruning 2-hop neighbors has a much smaller impact on structural information compared to 1-hop neighbors. Therefore, we apply the following semantic refinement exclusively to 2-hop neighbors.

\emph{Proof.} The features of the sampled neighbors are aggregated to obtain a representation for the seed node, which is then used for downstream tasks. We define a general aggregation formula as follows: 
\begin{equation}
    \mathbf{h}_{agg} = \sum_{u \in \mathcal{N}(v)} w(v, u) \mathbf{h}_u
\end{equation}
In this formulation, $v$ represents the seed node, $u$ is neighbors of $v$, $\mathbf{h}_u$ denotes the feature of neighbor $u$, $w(v,u)$ is the aggregation weight, and $\mathbf{h}_{\text{agg}}$ corresponds to the aggregated feature.

We can decompose the feature of neighbor $u$, $\mathbf{h}_u$, into a part that is semantically relevant to the seed node and a part that corresponds to irrelevant noise:
\begin{equation}
    \mathbf{h}_u = \mu_{vu}\mathbf{h}_v + \mathbf{h}_u^{\text{noise}}
\end{equation}

The first term corresponds to the part of $\mathbf{h}_u$ that is semantically relevant to the seed node $v$, with $\mu_{vu}$ representing the degree of relevance. The second term represents the part unrelated to $v$, i.e., random noise (e.g., heterophily), assumed to have variance $\sigma_{\text{noise}}^2$.

By substituting $\mathbf{h}_u$ into the aggregation formula, we can decompose the aggregated feature into two parts:
\begin{equation}
\mathbf{h}_{\text{agg}} = \sum_{u \in \mathcal{N}(v)} w(v, u) \mathbf{h}_u
= \underbrace{\sum_{u \in \mathcal{N}(v)} w(v, u) \mu_{vu} \mathbf{h}_v}_{\text{semantically relevant}} 
+ \underbrace{\sum_{u \in \mathcal{N}(v)} w(v, u) \mathbf{h}_u^{\text{noise}}}_{\text{noise}}
\end{equation}

Following the theoretical analysis in PairNorm \cite{zhaopairnorm}, which establishes that the information content of node features is strictly proportional to their squared norm, we measure the magnitude of the aggregated information using the $L_2$-norm. Therefore, the aggregated semantically relevant information is quantified as:
\begin{equation}
    I_\text{relevance} = \left\| \left( \sum_{u \in \mathcal{N}(v)} w(v,u) \, \mu_{vu} \right) \mathbf{h}_v \right\|^2 = \left( \sum_{u \in \mathcal{N}(v)} w(v,u) \mu_{uv} \right)^2 \cdot \|\mathbf{h}_v\|^2
\end{equation}

The variance directly measures the destructive information or uncertainty \cite{10.5555/3295222.3295309} introduced by irrelevant neighbors. Therefore, we compute the semantically noisy information as:
\begin{equation}
    \mathcal{I}_{\text{noise}} = \sum_{u \in \mathcal{N}(v)} \mathrm{Var}(w(v,u) \mathbf{h}_u^{\text{noise}})
\end{equation}
Using the properties of variance, we can factor out the weight $w(v, u)$ to obtain:
\begin{equation}
    \mathcal{I}_{\text{noise}} = \sum_{u \in \mathcal{N}(v)} w(v,u)^2 \, \mathrm{Var}(\mathbf{h}_u^{\text{noise}})
\end{equation}
If the variance of each individual noise is $\sigma_{\text{noise}}^2$:
\begin{equation}
    \mathcal{I}_{\text{noise}} = \sum_{u \in \mathcal{N}(v)} w(v,u)^2\sigma_{\text{noise}}^2 = \sigma_{\text{noise}}^2\sum_{u \in \mathcal{N}(v)} w(v,u)^2
\end{equation}
Therefore, the ratio of relevant information to noise information is:
\begin{equation}
    \text{SNR} = \frac{\mathcal{I}_\text{relevance}}{\mathcal{I}_{noise}} = \frac{\left( \sum_{u \in \mathcal{N}(v)} w(v,u) \mu_{uv} \right)^2 \cdot \|\mathbf{h}_v\|^2}{\sigma_{\text{noise}}^2\sum_{u \in \mathcal{N}(v)} w(v,u)^2} = \underbrace{\frac{\|\mathbf{h}_v\|^2}{\sigma_{\text{noise}}^2}}_{\text{constant}} \cdot \frac{\left( \sum_{u \in \mathcal{N}(v)} w(v,u) \mu_{uv} \right)^2 }{\sum_{u \in \mathcal{N}(v)} w(v,u)^2}
\end{equation}
To demonstrate the effectiveness of our semantic refinement, We denote the refined 2-hop neighbors by $\mathcal{N}_{refine}(v)$ and remaining subsets is denoted as $\mathcal{N}_{remaining}(v)$. 

Let $A$ and $B$ denote the relevance term and noise term in  SNR of our refinement strategy, respectively:
\begin{equation}
    A = \sum_{u \in \mathcal{N}_{refine}(v)} w(v,u) \mu_{vu}, \quad B = \sum_{u \in \mathcal{N}_{refine}(v)} w(v,u)^2.
\end{equation}

Similarly, for the remaining subset $\mathcal{N}_{remaining}(v)$, we define the relevance term $a$ and noise term $b$:
\begin{equation}
    a = \sum_{u \in {\mathcal{N}_{remaining}(v)}} w(v,u) \mu_{vu}, \quad b = \sum_{u \in \mathcal{N}_{remaining}(v)} w(v,u)^2
\end{equation}

 Nodes in $\mathcal{N}_{remaining}(v)$ have negligible correlation with the seed node, implying $\mu_{vu} \approx 0$ and thus $a \approx 0$. However, since these nodes objectively exist in the graph, they contribute a strictly positive noise term, i.e., $b > 0$.

Now, we compare the SNR of our refinement strategy ($\text{SNR}_{post}$) against the full 2-hop neighborhood strategy ($\text{SNR}_{pre}$):
\begin{equation}
    \text{SNR}_{after} = \frac{\mathcal{I}^{after}_{\text{relevance}}}{\mathcal{I}^{after}_{\text{noise}}} = \frac{\|\mathbf{h}_v\|^2}{\sigma_{\text{noise}}^2} \cdot \frac{A^2}{B}
\end{equation}
\begin{equation}
    \text{SNR}_{before} = \frac{\mathcal{I}^{before}_{\text{relevance}}}{\mathcal{I}^{before}_{\text{noise}}} = \frac{\|\mathbf{h}_v\|^2}{\sigma_{\text{noise}}^2} \cdot \frac{(A + a)^2}{B + b} \approx \frac{\|\mathbf{h}_v\|^2}{\sigma_{\text{noise}}^2} \cdot \frac{A^2}{B + b}.
\end{equation}

Since $b$ is positive and $B$ is positive, the denominator of the full 2-hop neighborhood strategy is strictly larger while the numerator remains approximately unchanged. This leads to the inequality:
\begin{equation}
    \frac{A^2}{B + b} < \frac{A^2}{B} \implies \text{SNR}_{before} < \text{SNR}_{after} \implies \frac{\mathcal{I}^{before}_{\text{relevance}}}{\mathcal{I}^{before}_{\text{noise}}} < \frac{\mathcal{I}^{after}_{\text{relevance}}}{\mathcal{I}^{after}_{\text{noise}}}.
\end{equation}

This inequality mathematically guarantees that pruning low-similarity neighbors effectively removes the noise denominator $b$ without sacrificing the relevance term numerator $A$, thereby strictly maximizing the ratio of relevant information to noise information of the aggregated representation. Consequently, this result rigorously confirms that our semantic refinement strategy functions as an effective denoising filter, eliminating irrelevant nodes to enhance the quality of node representation. \qed

\subsection{Convergence of Temporal Center}
\label{lab:a.4}

In this subsection, we provide a rigorous theoretical analysis demonstrating that the proposed attention mechanism is able to focus more effectively on temporally relevant nodes compared to existing attention mechanisms.

For the model to attend to relevant historical information, the learnable temporal center $\mu$ must converge to the ground-truth relevant timestamp $t^*$.

\emph{proof.} Consider a simplified objective where the model aims to maximize the alignment score $\mathcal{S}$ between a query and a ground-truth relevant event at time $t^*$. The optimization follows the gradient ascent direction $\nabla_\mu \mathcal{S}$. 

The baseline attention logit is derived from the dot product of sinusoidal encodings:
\begin{equation}
    \mathcal{S}_{\text{base}}(t^*) \approx \sum_{j=1}^{d/2} \cos(\omega_j (t^* - \mu)).
\end{equation}
The gradient with respect to $\mu$ is:
\begin{equation}
    \frac{\partial \mathcal{S}_{\text{base}}}{\partial \mu} = \sum_{j=1}^{d/2} \omega_j \sin(\omega_j (t^* - \mu)).
\end{equation}
This gradient function is a superposition of high-frequency sine waves. Its sign flips repeatedly as the distance $|t^* - \mu|$ changes. If the initialization of $\mu$ is far from $t^*$, the gradient direction is determined by local periodic behavior rather than the global distance to $t^*$. Consequently, the optimization is prone to getting trapped in local optima (false peaks) nearest to the initialization.

Our method incorporates an adaptive bias term $\mathcal{B}(t^*) = \exp\left( - \frac{(t^* - \mu)^2}{2\sigma^2} \right)$. The gradient contribution from this term is:
\begin{equation}
    \frac{\partial \mathcal{B}}{\partial \mu} = \mathcal{B}(t^*) \cdot \frac{t^* - \mu}{\sigma^2}.
\end{equation}
Let $\Delta = t^* - \mu$ be the alignment error. The sign of the gradient is determined by $\text{sgn}(\Delta)$:
\begin{itemize}
    \item If $\mu < t^*$ ($\Delta > 0$), the gradient is positive, pushing $\mu$ towards $t^*$.
    \item If $\mu > t^*$ ($\Delta < 0$), the gradient is negative, pushing $\mu$ towards $t^*$.
\end{itemize}
Unlike the baseline, the term $(t^* - \mu)$ acts as a linear \textbf{restoring force}. It provides a consistent global gradient direction that guides $\mu$ towards $t^*$, smoothing the optimization landscape and significantly expanding the basin of attraction for the true timestamp.

\subsection{Attention Score Ratio Enhancement}
\label{lab:a.5}

Building upon the analysis in Appendix \ref{lab:a.4}, which guarantees that the learnable temporal center $\mu$ converges to the ground-truth relevant timestamp $t^*$, we now prove that the proposed attention mechanism effectively concentrates attention scores on these relevant nodes. Specifically, we demonstrate that our method enforces a strict probabilistic preference for events temporally close to $t^*$, suppressing distant noise that the baseline model fails to filter.

\emph{Proof.} 
Let $t^*$ be the ground-truth relevant timestamp. Based on the gradient guidance property proved in Appendix \ref{lab:a.4}, we assume the model has successfully learned the optimal temporal reference, i.e., $\mu \approx t^*$.

Consider a relevant event $e_{\text{rel}}$ and an irrelevant noise event $e_{\text{noise}}$. We define their respective temporal differences relative to the ground-truth timestamp $t^*$ as:
\begin{equation}
    \delta_{\text{relevant}} = t_{\text{relevant}} - t^*,
\end{equation}
and
\begin{equation}
    \delta_{\text{noise}} = t_{\text{noise}} - t^*.
\end{equation}
The relevant event is temporally proximal to the target $t^*$, while the noise event is distant. This structural difference is formalized by the inequality of their magnitudes:
\begin{equation}
    |\delta_{\text{relevant}}| \ll |\delta_{\text{noise}}|.
\end{equation}

The baseline attention logit is $\mathcal{S}_{\text{base}}(\delta) = \sum \cos(\omega_j \delta)$.
According to Bohr's theorem on almost periodic functions~\cite{bohr1925theorie}, $\mathcal{S}_{\text{base}}$ does not vanish at infinity. Specifically:
\begin{equation}
\limsup_{|\delta|\to\infty} \mathcal{S}_{\text{base}}(\delta) = \sup_{\delta\in\mathbb{R}} \mathcal{S}_{\text{base}}(\delta).
\end{equation}
This implies that there exist infinitely many distant timestamps $t_{noise}$ (where $|\delta_{noise}| \to \infty$) such that:
\begin{equation}
\mathcal{S}_{\text{base}}(t_{noise}) \approx \mathcal{S}_{\text{base}}(t_{relevant}).
\end{equation}
Consequently, the baseline model cannot structurally guarantee that a relevant node receives a higher score than a distant noise node. The discriminative margin $\Delta_{\text{base}} = \mathcal{S}_{\text{base}}(t_{relevant}) - \mathcal{S}_{\text{base}}(t_{noise})$ has a lower bound of approximately $0$ (or even negative values due to oscillations), indicating a failure to distinguish relevance from periodic noise.

In our method, with $\mu \approx t^*$, the Gaussian bias becomes $\mathcal{B}(\delta) = \exp( - \delta^2 / 2\sigma^2 )$. The total logit is $\mathcal{S}_{\text{ours}}(\delta) = \mathcal{S}_{\text{base}}(\delta) + \mathcal{B}(\delta)$.

The discriminative margin between the relevant event and the noise event is:
\begin{equation}
\Delta_{\text{ours}} = \mathcal{S}_{\text{ours}}(t_{relevant}) - \mathcal{S}_{\text{ours}}(t_{noise}) = \Delta_{\text{base}} + \left[ \mathcal{B}(\delta_{relevant}) - \mathcal{B}(\delta_{noise}) \right].
\end{equation}

Since $|\delta_{relevant}| \ll |\delta_{noise}|$, and the Gaussian function is strictly monotonically decreasing with respect to distance from the center, we have:
\begin{equation}
\mathcal{B}(\delta_{relevant}) \gg \mathcal{B}(\delta_{noise}) \approx 0.
\end{equation}
Even in the worst-case scenario for the baseline (where $\Delta_{\text{base}} \le 0$ due to periodic misalignment), the Gaussian term introduces a strictly positive margin enhancement.

Specifically, considering the asymptotic limit where the noise becomes infinitely distant ($|\delta_{noise}| \to \infty$) and the relevant event is perfectly aligned with the learned center ($\delta_{rel} \to 0$), we have:
\begin{equation}
\lim_{|\delta_{noise}| \to \infty} \mathcal{B}(\delta_{noise}) = 0 \quad \text{and} \quad \lim_{\delta_{relevant} \to 0} \mathcal{B}(\delta_{relevant}) = 1.
\end{equation}
This imposes a strict exponential penalty on the noise in the score ratio:
\begin{equation}
 \frac{\alpha_{noise}^\text{w/ \ G}}{\alpha_{relevant}^\text{w/ \ G}} = \exp\left( \mathcal{S}_{\text{base}}({noise}) - \mathcal{S}_{\text{base}}({relevant}) - 1 \right) = e^{-1} \cdot \frac{\alpha_{noise}^\text{w/o \ G}}{\alpha_{relevant}^\text{w/o \ G}}.
\end{equation}
where $\alpha_{noise}$ and $\alpha_{relevant}$ denote the attention scores for temporal noise and related nodes, respectively; $\mathrm{\mathbf{e}}$ is Euler's number; and w/ G'' and w/o G'' distinguish the attention mechanisms with and without Gaussian bias.

Combining the result that $\mu$ converges to $t^*$ (Appendix \ref{lab:a.4}) with the Gaussian decay property, we prove that our method effectively assigns significantly higher attention scores to strictly temporally relevant nodes. Unlike the baseline, which is susceptible to distant "periodic ghosts," our mechanism imposes a structural envelope that filters out irrelevant temporal information, ensuring that the model's focus remains robustly on the learned temporal context.

\section{Experimental Details.}
\label{sec:apexpdetailes}
\subsection{Training Setting}
GelGT is trained in an end-to-end manner without any pre-training stage to ensure a fair comparison with baseline methods such as RelGT. We do not adopt pre-training in our setting for two reasons: (i) it introduces significant computational overhead, and (ii) it often fails to transfer effectively across relational graphs (e.g., Griffin still underperforms even after fine-tuning). In contrast, direct supervised training allows the model to better adapt to the target relational structure, leading to more stable and competitive performance.

\subsection{Benchmark Details.}
In this section, we present a comprehensive overview of the datasets and evaluation tasks sourced from RelBench \cite{0001RHHHDFLYZHL24}. 
RelBench encompasses seven datasets spanning diverse relational database domains, such as e-commerce, clinical records, social networks, and sports. 
These datasets are meticulously curated from their respective domains, with training set sizes ranging from 1.3K to 5.4M records, totaling 47M training instances. 
Each dataset supports various predictive tasks, including modeling user engagement with advertisements within a four-day window or forecasting the primary outcomes of clinical trials over a year. 

Collectively, RelBench defines 30 tasks categorized into entity classification, entity regression, and recommendation. 
Our evaluation focuses on 21 classification and regression tasks, as GelGT is primarily designed as a node representation learning framework for Relational Deep Learning (RDL). 
Following pror work \cite{dwivedi2026relational}, We exclude recommendation tasks in this study due to their specific requirements, such as target node identification \cite{YouGYL21} or the adoption of pair-wise learning architectures \cite{YuanZHNHSSHLLF25}. 
Detailed statistics for all evaluated datasets and tasks are summarized in Table \ref{tab:statistics}.
\begin{table}[t]
  \centering
  \caption{Dataset and task statistics from RelBench \cite{0001RHHHDFLYZHL24} used for our evaluation.}
  \label{tab:statistics}
  \renewcommand{\arraystretch}{1.1}
  \setlength{\tabcolsep}{4pt}
  % \tiny
 \resizebox{\linewidth}{!}{
  \begin{tabular}{lllccclll}
    \toprule
    \multirow{2}{*}{\textbf{Dataset}} & \multirow{2}{*}{\textbf{Task name}} & \multirow{2}{*}{\textbf{Task type}} & \multicolumn{3}{c}{\textbf{\#Rows of training table}} & \textbf{\#Unique} & \textbf{\%train/test} & \textbf{\#Dst} \\
    & & & \textbf{Train} & \textbf{Validation} & \textbf{Test} & \textbf{Entities} & \textbf{Entity Overlap} & \textbf{Entities} \\
    \midrule
    \multirow{7}{*}{rel-amazon} 
      & \texttt{user-churn} & entity-cls & 4,732,555 & 409,792 & 351,885 & 1,585,983 & 88.0 & --- \\
      & \texttt{item-churn} & entity-cls & 2,559,264 & 177,689 & 166,842 & 416,352 & 93.1 & --- \\
      & \texttt{user-ltv} & entity-reg & 4,732,555 & 409,792 & 351,885 & 1,585,983 & 88.0 & --- \\
      & \texttt{item-ltv} & entity-reg & 2,707,679 & 166,978 & 178,334 & 427,537 & 93.5 & --- \\
      & \texttt{user-item-purchase} & recommendation & 5,112,803 & 351,876 & 393,985 & 1,632,909 & 87.4 & 12,562,384 \\
      & \texttt{user-item-rate} & recommendation & 3,667,157 & 257,939 & 292,609 & 1,481,360 & 81.0 & 7,665,611 \\
      & \texttt{user-item-review} & recommendation & 2,324,177 & 116,970 & 127,021 & 894,136 & 74.1 & 5,406,835 \\
    \midrule
    \multirow{4}{*}{rel-avito} 
      & \texttt{ad-ctr} & entity-reg & 5,100 & 1,766 & 1,816 & 4,997 & 59.8 & --- \\
      & \texttt{user-clicks} & entity-cls & 59,454 & 21,183 & 47,996 & 66,449 & 45.3 & --- \\
      & \texttt{user-visits} & entity-cls & 86,619 & 29,979 & 36,129 & 63,405 & 64.6 & --- \\
      & \texttt{user-ad-visit} & recommendation & 86,616 & 29,979 & 36,129 & 63,402 & 64.6 & 3,616,174 \\
    \midrule
    \multirow{3}{*}{rel-event} 
      & \texttt{user-attendance} & entity-reg & 19,261 & 2,014 & 2,006 & 9,694 & 14.6 & --- \\
      & \texttt{user-repeat} & entity-cls & 3,842 & 268 & 246 & 1,514 & 11.5 & --- \\
      & \texttt{user-ignore} & entity-cls & 19,239 & 4,185 & 4,010 & 9,799 & 21.1 & --- \\
    \midrule
    \multirow{3}{*}{rel-f1} 
      & \texttt{driver-dnf} & entity-cls & 11,411 & 566 & 702 & 821 & 50.0 & --- \\
      & \texttt{driver-top3} & entity-cls & 1,353 & 588 & 726 & 134 & 50.0 & --- \\
      & \texttt{driver-position} & entity-reg & 7,453 & 499 & 760 & 826 & 44.6 & --- \\
    \midrule
    \multirow{3}{*}{rel-hm} 
      & \texttt{user-churn} & entity-cls & 3,871,410 & 76,556 & 74,575 & 1,002,984 & 89.7 & --- \\
      & \texttt{item-sales} & entity-reg & 5,488,184 & 105,542 & 105,542 & 105,542 & 100.0 & --- \\
      & \texttt{user-item-purchase} & recommendation & 3,878,451 & 74,575 & 67,144 & 1,004,046 & 89.2 & 13,428,473 \\
    \midrule
    \multirow{5}{*}{rel-stack} 
      & \texttt{user-engagement} & entity-cls & 1,360,850 & 85,838 & 88,137 & 88,137 & 97.4 & --- \\
      & \texttt{user-badge} & entity-cls & 3,386,276 & 247,398 & 255,360 & 255,360 & 96.9 & --- \\
      & \texttt{post-votes} & entity-reg & 2,453,921 & 156,216 & 160,903 & 160,903 & 97.1 & --- \\
      & \texttt{user-post-comment} & recommendation & 21,239 & 825 & 758 & 11,453 & 59.9 & 44,940 \\
      & \texttt{post-post-related} & recommendation & 5,855 & 226 & 258 & 5,924 & 8.5 & 7,456 \\
    \midrule
    \multirow{5}{*}{rel-trial} 
      & \texttt{study-outcome} & entity-cls & 11,994 & 960 & 825 & 13,779 & 0.0 & --- \\
      & \texttt{study-adverse} & entity-reg & 43,335 & 3,596 & 3,098 & 50,029 & 0.0 & --- \\
      & \texttt{site-success} & entity-reg & 151,407 & 19,740 & 22,617 & 129,542 & 42.0 & --- \\
      & \texttt{condition-sponsor-run} & recommendation & 36,934 & 2,081 & 2,057 & 3,956 & 98.4 & 533,624 \\
      & \texttt{site-sponsor-run} & recommendation & 669,310 & 37,003 & 27,428 & 445,513 & 48.3 & 1,565,463 \\
    \bottomrule
  \end{tabular}
  }
\end{table}
\subsection{Datasets}
In this section, we include the details on the datasets in RelBench \cite{0001RHHHDFLYZHL24} which we use for our evaluation.

\begin{itemize}
    \item \texttt{rel-amazon.} The Amazon E-commerce dataset consists of product details, user information, and review interactions from Amazon's platform, including metadata like pricing and categories, along with review ratings and content.
    
    \item \texttt{rel-avito.} Avito's marketplace dataset contains search queries, advertisement characteristics, and contextual information from this major online trading platform that facilitates transactions across various categories including real estate and vehicles.
    
    \item \texttt{rel-event.} The Event Recommendation dataset from Hangtime mobile app tracks users' social planning, capturing interactions, event details, demographic data, and social connections to reveal how relationships impact user behavior.
    
    \item \texttt{rel-f1.} The F1 dataset provides comprehensive Formula 1 racing information since 1950, documenting drivers, constructors, manufacturers, and circuits with detailed records of race results, standings, and specific data on various racing sessions and pit stops.
    
    \item \texttt{rel-hm.} H\&M's dataset contains customer-product interactions from their e-commerce platform, featuring customer demographics, product descriptions, and purchase histories.
    
    \item \texttt{rel-stack.} The Stack Exchange dataset documents activity from this network of Q\&A websites, including user biographies, posts, comments, edits, votes, and question relationships where users earn reputation through contributions.
    
    \item \texttt{rel-trial.} The clinical trial dataset from the AACT initiative has study protocols and outcomes, containing trial designs, participant information, intervention details, and results metrics, serving as a key resource for medical research.
\end{itemize}
\subsection{Tasks}
The following entity classification and regression tasks are defined in RelBench for the above datasets.

\begin{enumerate}
    \item \textbf{rel-amazon}
    \begin{enumerate}
        \item \texttt{user-churn}: Predict whether a user will discontinue reviewing products within the next three months.
        \item \texttt{item-churn}: Predict if a product will have no reviews in the next three months.
        \item \texttt{user-ltv}: Estimate the total monetary value of merchandise in dollars that a user will purchase and review within the next three months.
        \item \texttt{item-ltv}: Estimate the total monetary value of purchases and reviews a product will receive during the next three months.
    \end{enumerate}

    \item \textbf{rel-avito}
    \begin{enumerate}
        \item \texttt{user-visits}: Predict if a user will engage with several (advertisements) ads within the upcoming four days.
        \item \texttt{user-clicks}: Predict whether a user will interact with multiple ads through clicking within the upcoming four days.
        \item \texttt{ad-ctr}: Estimate the interaction probability for an ad, assuming it receives an interaction within four days.
    \end{enumerate}

    \item \textbf{rel-event}
    \begin{enumerate}
        \item \texttt{user-attendance}: Estimate the number of events a user will confirm attendance to (RSVP yes or maybe) within the upcoming seven days.
        \item \texttt{user-repeat}: Predict whether a user will join an event (RSVP yes or maybe) within the upcoming seven days, provided they attended in an event during the previous fourteen days.
        \item \texttt{user-ignore}: Predict whether a user will disregard or ignore more than two events invitations within the upcoming seven days.
    \end{enumerate}

    \item \textbf{rel-f1}
    \begin{enumerate}
        \item \texttt{driver-dnf}: Predict if a driver will not finish a race within the upcoming month.
        \item \texttt{driver-top3}: Determine if a driver will achieve a top-three qualifying position in a race within the upcoming month.
        \item \texttt{driver-position}: Estimate a driver's average finishing placement across all races in the upcoming two months.
    \end{enumerate}

    \item \textbf{rel-hm}
    \begin{enumerate}
        \item \texttt{user-churn}: Predict whether a customer will not perform any transactions in the upcoming week.
        \item \texttt{item-sales}: Estimate total revenue generated by a product in the upcoming week.
    \end{enumerate}

    \item \textbf{rel-stack}
    \begin{enumerate}
        \item \texttt{user-engagement}: Predict whether a user will contribute through voting, posting, or commenting within the upcoming three months.
        \item \texttt{user-badge}: Predict whether a user will secure a new badge within the upcoming three months.
        \item \texttt{post-votes}: Estimate the number of votes a user's post will accumulate over the upcoming three months.
    \end{enumerate}

    \item \textbf{rel-trial}
    \begin{enumerate}
        \item \texttt{study-outcome}: Predict whether a clinical trial will achieve its principal outcome within the upcoming year.
        \item \texttt{study-adverse}: Estimate the number of patients who will experience significant adverse effects or mortality in a clinical trial over the upcoming year.
        \item \texttt{site-success}: Estimate the success rate of a clinical trial site in the upcoming year.
    \end{enumerate}
\end{enumerate}

\subsection{Hyperparameter Settings}
\label{setup}
\paragraph{Sampling Module.} In our experimental setup, to accommodate the varying demands for neighborhood information density across different downstream tasks, we employ a two-stage neighborhood sampling strategy. Specifically, for both node classification and node regression tasks, we consistently define the sampling scope as the 2-hop neighborhood of the central node to capture sufficient local structural information. 

However, we implement differentiated configurations for sampling sizes based on task characteristics. For the node classification task, we initially sample $300$ candidate neighbors in the first stage, followed by a filtering mechanism that retains $200$ nodes as input. Conversely, for the node regression task---to address potentially more complex feature dependencies---we increase the initial sampling budget to $500$ nodes, ultimately selecting $300$ nodes for model training. This tailored configuration aims to strike an optimal balance between computational efficiency and model performance. 

\paragraph{Model Module.} Our proposed model adopts a unified hybrid architecture designed to capture both local structural topology and global temporal-semantic dependencies. 

To effectively handle the heterogeneous and temporal nature of the relational graph, we employ a set of domain-specific encoders \cite{dwivedi2026relational} to project diverse input features into a unified latent space $\mathbb{R}^{d}$. Specifically, we utilize learnable embedding layers to encode node type heterogeneity and relative hop distances from the central node. Furthermore, continuous timestamps are processed via a \texttt{TimeEncoder} to preserve temporal dynamics, while raw tabular node features are encoded using a \texttt{TabularEncoder} based on feature statistics. To enhance subgraph structural awareness, a learnable positional encoding is also incorporated. All resulting embeddings are normalized via LayerNorm, concatenated, and projected to a dimension of $d=512$ through a 2-layer MLP mixture network.

The core processing layer, \texttt{GelGTLayer}, processes the refined subgraph sequence $X_{in}$ through two parallel branches to extract complementary features. The GNN Branch is designed to explicitly capture local neighborhood topology using a 3-layer GraphSAGE \cite{HamiltonYL17} backbone. Each layer applies message passing followed by LayerNorm \cite{BaKH16}, GELU \cite{hendrycks2016gaussian} activation, and Dropout ($p=0.1$). Crucially, residual connections \cite{HeZRS16} are incorporated at each layer to facilitate gradient flow and prevent over-smoothing \cite{ChenWHDL20, 0001RGBWR21, WuSZFYW19, LiHW18}, yielding a structure-aware representation $H_{gnn}$. 

Parallel to the GNN, the Attention Branch employs a Transformer-based \texttt{LocalModule} to capture long-range dependencies and temporal interactions. By utilizing Multi-Head Self-Attention (Heads=$4$) integrated with temporal embeddings, this branch allows the model to dynamically weigh neighbors based on both semantic similarity and temporal proximity, producing a context-aware representation $H_{attn}$.

To optimally combine the structural and semantic representations, we introduce a learnable gating parameter $\eta$. The final representation $H_{final}$ is computed as a dynamic weighted sum:
\begin{equation}
    H_{final} = \eta \cdot H_{attn} + (1 - \eta) \cdot H_{gnn}
\end{equation}
where $\sigma(\cdot)$ denotes the Sigmoid function. This mechanism allows the model to adaptively balance the contribution of GNN and Transformer modules based on the specific requirements of the downstream task. Finally, a 2-layer MLP head generates the prediction.

The detailed hyperparameter configurations and implementation specifications are summarized in Table~\ref{tab:full_hyperparams}.

\begin{table*}[h]
    \centering
    \caption{Detailed hyperparameter settings and implementation specifications.}
    \label{tab:full_hyperparams}
    \renewcommand{\arraystretch}{1.2}
    \begin{tabular}{l|c|l}
        \toprule
        \textbf{Hyperparameter} & \textbf{Value} & \textbf{Description / Note} \\
        \midrule
        \multicolumn{3}{c}{\textit{\textbf{Model Architecture}}} \\
        \midrule
        Hidden Dimension ($d$) & $512$ & Embedding size for all encoders \\
        Global Layers ($L_{global}$) & $4$ & Number of model layers (from script) \\
        Local GNN Depth & $3$ & Layers within internal GNNStack \\
        Attention Heads & $4$ & Multi-head attention mechanism \\
        Feed-forward Network (FFN) Ratio & $2$ & Hidden dim expansion in FFN \\
        Positional Encoding Dim & $128$ & Learnable structural PE \\
        Number of Centroids & $4096$ & For vector quantization (VQ) \\
        VQ Decay & $0.99$ & EMA decay rate for codebook update \\
        \midrule
        \multicolumn{3}{c}{\textit{\textbf{Optimization \& Training}}} \\
        \midrule
        Optimizer & Adam & Standard optimization algorithm \\
        Learning Rate & $1 \times 10^{-4}$ & Base learning rate \\
        Weight Decay & $1 \times 10^{-5}$ & L2 regularization coefficient \\
        Batch Size & $64$ & Adjusted for memory constraints \\
        Dropout Rate & $0.3$ & Applied to FFN and Attention \\
        Max Training Steps & $500$ & Number of steps per epoch \\
        Total Epochs & $10$ & Total training duration \\
        Warmup Steps & $10$ & Linear warmup strategy \\
        \midrule
        \multicolumn{3}{c}{\textit{\textbf{Data Sampling \& Processing}}} \\
        \midrule
        Neighborhood Scope & $2$-hop & Maximum hop distance sampled \\
        \textbf{Classification Configuration} & $300 \to 200$ & Sample 300, select top-200 \\
        \textbf{Regression Configuration} & $500 \to 300$ & Sample 500, select top-300 \\
        \bottomrule
    \end{tabular}
\end{table*}

\subsection{Hardcore Configurations}
\label{computation env}
We conduct all experiments with:
\begin{itemize}
    \item Operating System: Ubuntu 20.04.6 LTS.
    \item CPU: Intel(R) Xeon(R) Platinum 8358 CPU @ 2.60GHz.
    \item GPU: NVIDIA Tesla A100 SMX4 with 80GB of Memory.
    \item Software: CUDA 12.1, Python 3.10.18, PyTorch \cite{PaszkeGMLBCKLGA19} 2.5.1.
\end{itemize}

\section{Additional Baselines and Datasets}
\label{sec:apadditionalbaseline}
To further validate the robustness, generalizability, and architectural superiority of the Gaussian Relational Graph Transformer (GelGT), we expand our empirical evaluation to include the highly challenging SALT dataset \cite{gu2026relbenchv2largescalebenchmark} and benchmark against a broader spectrum of emerging and advanced baseline models, including Rel-LLM \cite{WuDL25}, Nodeformer \cite{WuZLWY22}, RelGNN \cite{ChenKL25}, and RGP \cite{lachi2025integrating}.

The SALT dataset presents a significantly more complex environment for relational deep learning. To ensure a direct and rigorous comparison with existing temporal neighbor sampling methods such as RGP, we evaluated GelGT across multiple classification tasks within the SALT dataset. We report the Mean Reciprocal Rank (MRR) to align with standard evaluation metrics used in recent literature.

As demonstrated in Table \ref{tab:salt_dataset}, GelGT consistently achieves state-of-the-art performance across all tasks. The substantial performance gap confirms that our approach—specifically the Gaussian temporal bias combined with structure-aware sampling—captures long-range relational dependencies more effectively than traditional heterogeneous methods (HGT) and recent temporal sampling strategies (RGP).

We further isolate the specific advantages of our structural-semantic-temporal collaborative modeling by comparing GelGT against two advanced architectures: Rel-LLM and Nodeformer.

We evaluated our model against Rel-LLM, a recent method leveraging large language models for relational graphs, and Nodeformer, an architecture specifically optimized for long-range dependencies. As shown in Table \ref{tab:emerging_baselines} (evaluated via AUC), GelGT consistently outperforms both baselines. This confirms that GelGT’s success is not merely a byproduct of general model capacity, but fundamentally stems from our decoupled continuous Gaussian temporal bias, which precisely focuses on temporal patterns without distorting semantic embeddings.
\begin{table*}[htb]
    \centering
    
    \begin{minipage}{0.48\linewidth}
        \centering
        \caption{Test set results on the SALT dataset classification tasks. Best values are in \textbf{bold}. $\uparrow$: higher is better.}
        \label{tab:salt_dataset}
        \begin{adjustbox}{width=\linewidth} 
        \begin{tabular}{lccccc}
        \toprule
        \bf Task $\uparrow$ & \bf RDL & \bf HGT & \bf RelGT & \bf RGP & \bf Ours \\
        \midrule
        rel-salt-sales-group & 0.20 & 0.20 & 0.35 & 0.34 & \textbf{0.38} \\
        rel-salt-sales-incoterms & 0.70 & 0.75 & 0.73 & - & \textbf{0.77} \\
        rel-salt-sales-payterms & 0.39 & 0.60 & 0.61 & 0.58 & \textbf{0.64} \\
        rel-salt-sales-office & 0.93 & 0.96 & 0.98 & - & \textbf{0.99} \\
        rel-salt-sales-shipcond & 0.59 & 0.76 & 0.75 & 0.81 & \textbf{0.83} \\
        rel-salt-item-incoterms & 0.64 & 0.75 & 0.84 & 0.81 & \textbf{0.87} \\
        rel-salt-item-plant & 0.90 & 0.91 & 0.97 & - & \textbf{0.99} \\
        rel-salt-item-shippoint & 0.94 & 0.93 & 0.98 & - & \textbf{0.99} \\
        \bottomrule
        \end{tabular}
        \end{adjustbox}
    \end{minipage}\hfill
    \begin{minipage}{0.48\linewidth}
        \centering
        \caption{Performance comparison with emerging graph learning architectures. Best values are in \textbf{bold}. $\uparrow$: higher is better.}
        \label{tab:emerging_baselines}
        \begin{adjustbox}{width=\linewidth}
        \begin{tabular}{lccc}
        \toprule
        \bf Task $\uparrow$ & \bf Rel-LLM & \bf Nodeformer & \bf Ours \\
        \midrule
        rel-f1-driver-top3 & 0.82 & 0.82 & \textbf{0.84} \\
        rel-avito-user-clicks & 0.66 & 0.66 & \textbf{0.68} \\
        rel-event-user-repeat & 0.79 & 0.79 & \textbf{0.83} \\
        rel-event-user-ignore & 0.83 & 0.83 & \textbf{0.87} \\
        rel-trial-study-outcome & 0.71 & 0.71 & \textbf{0.72} \\
        rel-stack-user-badge & 0.89 & 0.89 & \textbf{0.90} \\
        rel-hm-user-churn & 0.55 & 0.61 & \textbf{0.69} \\
        \bottomrule
        \end{tabular}
        \end{adjustbox}
    \end{minipage}
    
\end{table*}

\section{Additional Efficiency Experiments}
\label{efficiency}
In this section, we provide a detailed analysis of the computational efficiency of the proposed GelGT architecture. We specifically evaluate the overhead introduced by our structure-semantic collaborative sampling strategy and the Gaussian temporal bias-based attention mechanism, comparing them against standard baselines and optimized kernels.
\subsection{Sampling Efficiency}
A potential concern regarding our proposed sampling mechanism is whether the Breadth-First Search (BFS) combined with semantic similarity-based refinement introduces significant computational latency compared to naive Random Sampling. However, empirical results demonstrate that our approach maintains a cost highly comparable to Random Sampling.

The efficiency of our sampling strategy stems from its dual-stage design:
\begin{itemize}
    \item {\bf Stage 1 (Structural Integrity Sampling):} Under our standard configuration (e.g., a 2-hop depth and a 300-node limit), the overhead of managing the BFS queue is marginal. It remains on the same scale as the random number generation and deduplication operations required by Random Sampling.
    \item {\bf Stage 2 (Semantic Refinement):} The similarity calculation and selection process are executed in parallel for all subgraphs within a batch, ensuring minimal additional latency.
\end{itemize}
To validate this, we measured the single-node sampling times across two RelBench \cite{gu2026relbenchv2largescalebenchmark} datasets, including rel-ratebeer, a massive-scale graph with tens of millions of nodes. As shown in Table \ref{tab:sampling_efficiency}, our method remains just as efficient as Random Sampling, proving its feasibility for large-scale relational databases.
\begin{table}[htb]
    \centering
    \caption{Comparison of per-node sampling time between random sampling and the proposed method.}
    \label{tab:sampling_efficiency}
    \begin{tabular}{lcc}
        \toprule
        \bf Task & \bf Random Sampling & \bf Ours \\
        \midrule
        rel-trial-study-outcome & 14ms & 14ms \\
        rel-ratebeer-beer\_rating-total\_score & 87ms & 88ms \\
        \bottomrule
    \end{tabular}
\end{table}

\subsection{Attention Efficiency}
We also analyze the computational cost of our Gaussian Temporal Bias-based Attention. While modern optimized attention kernels like FlashAttention significantly accelerate vanilla attention, they are often less compatible with arbitrary pairwise biases. We demonstrate that our method remains highly competitive through two key perspectives: practical scalability and context size requirements.

We compared the training runtime (Epoch Time) and GPU memory consumption of our attention mechanism against FlashAttention across different context sizes (Table \ref{tab:attention_scalability}). For our standard context size of 300 nodes, FlashAttention yields only a marginal 1.07$\times$ speedup, as the small context does not trigger typical memory bottlenecks. When aggressively scaling the sample size up to 3,000, our method's runtime remains in the same order of magnitude as FlashAttention. At an extreme scale of 30,000 nodes, both methods inevitably face Out-Of-Memory (OOM) errors due to hardware limitations. This confirms that the practical training overhead of our attention mechanism scales similarly to optimized vanilla attention.

A common assumption is that larger graphs inherently require substantially more context (e.g., thousands of nodes). However, our structure-semantic collaborative sampling effectively filters out irrelevant noise, eliminating the need for massive context windows. As demonstrated on the rel-ratebeer regression task (a dataset with over 13.7 million nodes), GelGT successfully captures the necessary structural information and achieves SOTA performance (Table \ref{tab:ratebeer_results}) while strictly maintaining an extremely small context size of 300 nodes.

By improving sampling quality and strictly controlling the sample size, GelGT fundamentally avoids the quadratic computational bottlenecks of attention mechanisms, rendering the lack of FlashAttention compatibility a non-issue in practical deployments.
\begin{table*}[htb]
    \centering
    
    \begin{minipage}{0.48\textwidth}
        \centering
        \caption{Scalability comparison between our proposed attention and FlashAttention across varying sampled node sizes.}
        \label{tab:attention_scalability}
        \begin{adjustbox}{width=\textwidth}
        \begin{tabular}{lcccc}
            \toprule
            \multirow{2}{*}{\bf Context Size} & \multicolumn{2}{c}{\bf Ours} & \multicolumn{2}{c}{\bf FlashAttention} \\
            \cmidrule(lr){2-3} \cmidrule(lr){4-5}
            & \bf Epoch Time & \bf GPU Mem & \bf Epoch Time & \bf GPU Mem \\
            \midrule
            Size 300   & 225ms  & 2.2GB  & 210ms  & 1.9GB  \\
            Size 3000  & 1376ms & 70.0GB & 1004ms & 53.9GB \\
            Size 30000 & \textemdash\ OOM \textemdash & \textemdash\ OOM \textemdash & \textemdash\ OOM \textemdash & \textemdash\ OOM \textemdash \\
            \bottomrule
        \end{tabular}
        \end{adjustbox}
    \end{minipage}
    \hfill
    % ----------------
    \begin{minipage}{0.48\textwidth}
        \centering
        \caption{Performance comparison on the rel-ratebeer dataset (regression task). Best values (in terms of mean performance) are in \textbf{bold}. $\downarrow$ indicates lower is better.}
        \label{tab:ratebeer_results}
        \begin{adjustbox}{width=\textwidth}
        \begin{tabular}{lcccc}
            \toprule
            \bf Task $\downarrow$ & \bf RDL & \bf RelGT & \bf LightGBM & \bf Ours \\
            \midrule
            rel-beer\_ratings-total\_score & 0.36 & 0.33 & 0.47 & \textbf{0.32} \\
            \bottomrule
        \end{tabular}
        \end{adjustbox}
    \end{minipage}
\end{table*}

\subsection{Efficiency on Extreme Graph Structures}

Our sampling strategy is constrained by dual factors: a maximum BFS depth and a fixed candidate budget. This design ensures consistent efficiency across graph structures with varying densities.

In dense graphs, BFS quickly reaches the fixed budget (300 nodes), which bounds the cost of subsequent semantic filtering. In sparse graphs, the BFS traversal terminates earlier due to depth limitation, naturally reducing the candidate pool and further accelerating computation.

To validate this behavior, we construct extreme graph settings following the skewness-based generation protocol in~\cite{ding2024play}. We report the average sampling time in Table~\ref{tab:extreme_sampling}.

\begin{table}[h]
\centering
\caption{Sampling efficiency under extreme graph structures.}
\label{tab:extreme_sampling}
\begin{tabular}{ccc}
\hline
\textbf{Sparse Graph} & \textbf{trial-study-outcome} & \textbf{Dense Graph} \\
\hline
11 ms & 14 ms & 15 ms \\
\hline
\end{tabular}
\end{table}

The results indicate that our sampling strategy maintains stable efficiency under dense conditions and even improves slightly in sparse settings, demonstrating robustness across graph distributions.

\section{Detailed Description of Encoder Modules}
\label{lab:encoder}
In this section, we describe the specific encoder modules designed to process the heterogeneous attributes associated with nodes in our relational graph framework. Follow previous work \cite{dwivedi2026relational}, these encoders transform raw input features—including node types, hop distances, temporal information, tabular attributes, and structural positions—into dense vector representations suitable for downstream learning tasks.

\paragraph{Neighbor Node Type Encode.} The Neighbor Node Type Encoder maps discrete heterogeneity information into a continuous latent space using a learnable lookup table (Embedding Layer). It instantiates a parameter matrix $\mathbf{E}_{\text{type}} \in \mathbb{R}^{|\mathcal{T}| \times d}$, where $|\mathcal{T}|$ is the number of node types. The encoder takes integer indices as input and performs a direct embedding lookup to project categorical entity constraints into dense vector representations.

\paragraph{Neighbor Hop Encoder.} The Neighbor Hop Encoder introduces structural inductive bias by encoding the shortest-path distance using a dedicated Embedding Layer. To accommodate 0-based or 1-based indexing, the scalar hop counts are shifted and mapped via an embedding matrix $\mathbf{E}_{\text{hop}} \in \mathbb{R}^{K_{\text{max}} \times d}$. This non-linear mapping allows the model to learn distance-dependent feature transformations rather than treating distance merely as a scalar weight.

\paragraph{Neighbor Time Encoder.} The Neighbor Time Encoder transforms continuous scalar time intervals using a Sinusoidal Positional Encoding (PE) mechanism followed by a Linear Projection. Specifically, the module first projects scalar time differences $\Delta t$ into $\mathbb{R}^d$ using fixed Fourier feature frequencies (Sinusoidal PE). The output is then passed through a learnable Linear Layer (nn.Linear). To handle invalid temporal data (e.g., negative time deltas), the module utilizes a learnable Mask Parameter, combining the linear output and the mask vector via a gating mechanism. 

\paragraph{Neighbor Tabular Feature Encoder.} The Neighbor Tabular Feature Encoder processes heterogeneous tabular attributes (numerical, categorical, and timestamps) using a ResNet backbone adapted for tabular data via the torch frame library. For each node type, the encoder first applies specific column encoders—Linear Encoders for numerical features and Embedding Encoders for categorical features. These processed features are then fed into a deep ResNet architecture (consisting of multiple residual blocks with Linear, BatchNorm, and ReLU layers) to extract high-order interactions between columns.

\paragraph{GNN Positional Encoder.} The GNN-based Positional Encoder captures local topological structures using a Graph Isomorphism Network (GIN) backbone. Node features are initialized either with random noise or Laplacian Eigenvector Positional Encodings (LapPE) projected via a Linear layer. The structure is processed by stacked GIN Convolutional Layers, where each layer utilizes a Multi-Layer Perceptron (MLP) as the aggregation function. This MLP is composed of Linear $\to$ BatchNorm $\to$ ReLU $\to$ Linear blocks. Residual connections are applied across layers, and the final embedding is obtained by aggregating layer-wise outputs via a generic pooling function (e.g., concatenation or max-pooling) followed by a final linear transformation.

\section{Additional Analysis and Discussion}
\label{sec:apanalysis}
\subsection{Semantic Refinement Design}
\label{refinement desigh}
We describe the design of semantic refinement from two aspects: (1) its exclusion at 1-hop neighbors, and (2) its behavior under non-homophilic settings.

\textbf{Exclusion at 1-hop neighbors.}
Semantic refinement is not applied to 1-hop neighbors in order to preserve high-quality first-order signals. In relational databases, 1-hop neighbors correspond to direct PK-FK relations and provide the most reliable structural and semantic information for the target node. Filtering at this stage may remove essential information and degrade the quality of initial node representations. Preserving complete first-order neighborhoods is also consistent with prior findings (e.g., LightGCN~\cite{0001DWLZ020}), which emphasize the importance of retaining full local connectivity for effective representation learning.

\textbf{Behavior under non-homophilic settings.}
Semantic refinement is designed to operate beyond strict homophily assumptions. Instead of relying on node-type similarity, the method evaluates neighbors in a shared latent space \cite{WangJSWYCY19} constructed from tabular attributes and positional encodings. In this space, similarity reflects task-relevant semantic correlation. Consequently, heterogeneous neighbors are retained as long as they contribute useful information to the target task. The refinement mechanism mainly acts on higher-hop neighbors to suppress less relevant or noisy signals, while preserving informative dependencies even in heterophilic scenarios.

Overall, semantic refinement balances information preservation and noise reduction by maintaining complete first-order neighborhoods and selectively filtering higher-order neighbors based on task-relevant semantic signals.

\subsection{Generality of the Sampling Strategy}
\label{sampleing}
To demonstrate the generality of the proposed sampling strategy, we further evaluate it on another relational graph learning architecture, RelGNN. We integrate our sampling strategy into RelGNN without modifying its original model design, and compare the performance with its vanilla version.

The results are summarized in Table~\ref{tab:generality}. We observe that the proposed sampling strategy consistently improves the performance of RelGNN across multiple tasks. This suggests that the effectiveness of our method is not tied to a specific model architecture.

We attribute these gains to the ability of our semantic similarity-based filtering to reduce noisy or less informative neighbors, thereby providing cleaner and more relevant neighborhood signals for downstream relational learning.

\begin{table}[t]
\centering
\caption{Performance comparison (AUC) between RelGNN and RelGNN with the proposed sampling strategy.}
\label{tab:generality}
\begin{tabular}{lcc}
\toprule
\textbf{Dataset} & \textbf{RelGNN} & \textbf{RelGNN (w/ Ours)} \\
\midrule
rel-avito-user-visits     & 0.6517 & \bf 0.6673 \\
rel-event-user-repeat     & 0.7661 & \bf 0.7898 \\
rel-trial-study-outcome   & 0.7003 & \bf 0.7290 \\
rel-amazon-item-churn     & 0.8043 & \bf 0.8118 \\
rel-stack-user-engagement & 0.8975 & \bf 0.9088 \\
\bottomrule
\end{tabular}
\end{table}

\subsection{Homophily vs. Heterophily in Relational Graphs}
In this section, we investigate whether relational graphs derived from databases exhibit homophilic or heterophilic properties, and clarify the role of low-similarity neighbors in our learning framework.

\paragraph{Heterophily of relational graphs.}
As discussed in the main paper (Line 90) and established in prior studies~\cite{zhu2020beyond, 0001RHHHDFLYZHL24, dwivedi2026relational}, relational graphs constructed from real-world databases are typically heterogeneous and temporal in nature. To further quantify this property, we compute the Edge Homophily Ratio~\cite{zhu2020beyond} on all evaluated datasets.

As shown in Table~\ref{tab:homophily}, all datasets consistently exhibit low homophily ratios (all $< 0.5$), indicating strong heterophily. This empirical evidence supports the conclusion that relational graphs in our setting are inherently heterophilous rather than homophilous.

\begin{table}[t]
\centering
\caption{Edge homophily ratio across datasets.}
\label{tab:homophily}
\begin{tabular}{lc}
\toprule
\textbf{Task} & \textbf{Edge Homophily Ratio} \\
\midrule
rel-f1-driver-dnf          & 0.23 \\
rel-avito-user-clicks      & 0.17 \\
rel-event-user-repeat      & 0.13 \\
rel-trial-study-outcome    & 0.22 \\
rel-amazon-item-churn      & 0.21 \\
rel-stack-user-badge       & 0.21 \\
rel-hm-user-churn          & 0.25 \\
\bottomrule
\end{tabular}
\end{table}

\paragraph{Role of low-similarity neighbors.}
A key design choice in our method is to filter out low-similarity neighbors, which are treated as noisy or less informative connections. This design is supported by both empirical observations and the underlying modeling mechanism.

From an empirical perspective, as shown in Figure \ref{fig:sample_size} of the main paper, increasing the number of sampled neighbors—thereby introducing more low-similarity neighbors—leads to a consistent degradation in performance. This indicates that such neighbors tend to introduce noise rather than useful signal for prediction.

From a mechanistic perspective, following the formulation in~\cite{WangJSWYCY19}, our model projects tabular attributes and positional encodings into a shared semantic latent space, where similarity reflects task-relevant semantic correlation rather than structural homophily. Under this view, neighbors with low similarity in the learned space are more likely to be task-irrelevant and thus act as noise. Filtering them therefore improves representation quality and predictive performance.

Overall, these results suggest that low-similarity neighbors are generally not beneficial for the considered tasks under our semantic similarity space, and instead may negatively affect model performance when included indiscriminately.

\subsection{Analysis of Gaussian Temporal Bias}
In this section, we analyze the design choice of the proposed Gaussian temporal bias and compare it with alternative temporal modeling strategies to better understand its advantages.

\paragraph{Comparison with alternative temporal bias mechanisms.}
We first compare our method with two representative alternatives: (1) scaled linear temporal biases~\cite{presstrain}, and (2) learnable temporal embeddings~\cite{li2020time}.

Scaled linear biases impose a monotonically increasing penalty with respect to temporal distance, effectively favoring recent interactions. However, such a design is limited in modeling non-monotonic or periodic temporal dependencies commonly observed in real-world relational data. In contrast, our Gaussian formulation introduces a learnable mean parameter, which allows the attention peak to shift adaptively toward task-relevant historical periods rather than strictly prioritizing recent events.

Learnable temporal embeddings, on the other hand, discretize time into bins and assign independent parameters to each interval. While expressive, this approach increases parameter complexity linearly with the temporal range and may introduce redundancy. In comparison, our Gaussian bias only introduces two learnable scalars per attention head (mean and variance), making it significantly more parameter-efficient and scalable across varying time spans.

\paragraph{Empirical comparison.}
We further evaluate the effectiveness of different temporal bias designs across multiple tasks. As shown in Table~\ref{tab:gaussian_compare}, our Gaussian temporal bias consistently outperforms both baselines.

\begin{table}[t]
\centering
\caption{Performance comparison (AUC) of different temporal bias mechanisms.}
\label{tab:gaussian_compare}
\begin{tabular}{lccc}
\toprule
\textbf{Task} & \textbf{Scaled Linear Bias} & \textbf{Temporal Embedding} & \textbf{Ours} \\
\midrule
f1-driver-dnf         & 0.75 & 0.75 & 0.76 \\
avito-user-clicks     & 0.67 & 0.67 & 0.68 \\
event-user-repeat     & 0.75 & 0.74 & 0.76 \\
trial-study-outcome   & 0.70 & 0.71 & 0.72 \\
amazon-item-churn     & 0.69 & 0.69 & 0.70 \\
stack-user-badge      & 0.89 & 0.88 & 0.90 \\
hm-user-churn         & 0.68 & 0.67 & 0.69 \\
\bottomrule
\end{tabular}
\end{table}

\paragraph{Discussion.}
The results demonstrate that the Gaussian temporal bias effectively captures more flexible temporal patterns compared to monotonic or discretized formulations. This flexibility, together with its parameter efficiency, enables consistent improvements across diverse relational prediction tasks.

\subsection{Additional Discussion on Design Choices}
In this section, we further clarify the design motivations behind the proposed components and highlight their relevance to the unique challenges of relational deep learning (RDL).

\paragraph{Gaussian temporal modeling.}
Timestamped interactions in relational databases often exhibit continuous, irregular, and non-monotonic temporal dynamics. To effectively capture such patterns, we propose a Gaussian temporal bias mechanism, which introduces only two learnable scalar parameters per attention head. 

Compared to scaled linear biases~\cite{presstrain}, which enforce a strictly monotonic temporal decay, and learnable temporal embeddings~\cite{li2020time}, which discretize time and scale parameters with temporal granularity, our formulation provides a more flexible and compact representation of temporal influence. In particular, the learnable mean enables adaptive shifting of the attention peak toward task-relevant historical periods, while the variance controls the temporal sensitivity range. This design offers a principled and parameter-efficient alternative for continuous temporal modeling in relational graphs.

\paragraph{Temporal-aware BFS sampling.}
Our sampling strategy is also carefully designed for relational databases and is not a direct application of standard BFS. Instead, we incorporate explicit temporal constraints into the BFS traversal process. This design ensures that sampled subgraphs preserve both (i) complete primary–foreign key relational structures and (ii) strict temporal causality.

Unlike prior RDL sampling methods~\cite{zhu2020beyond, 0001RHHHDFLYZHL24, dwivedi2026relational}, which either ignore temporal ordering or fail to jointly preserve structural and temporal consistency, our approach enables the extraction of causally valid relational subgraphs. This makes it particularly suitable for learning over dynamic relational systems where both structure and time are critical.

\paragraph{Summary.}
Overall, the proposed designs are specifically motivated by the characteristics of relational databases, where both temporal irregularity and schema-driven structure play central roles. These components jointly provide a unified and efficient framework for relational representation learning.

\subsection{Sampling Budget Analysis}
\label{sampling budget}

To justify the choice of a fixed candidate size in our original design, we analyze the effect of sampling budget on both performance and efficiency. To better understand the effective neighborhood size, we additionally introduce an auxiliary diagnostic mechanism with adaptive soft-gating, which is used solely for analysis and is not involved in training or inference of the proposed model. This analysis reveals that the effective number of retained neighbors consistently stabilizes around a fixed range, providing empirical support for our initial design choice.

As shown in Table~\ref{tab:sampling_budget}, the average number of retained effective neighbors across datasets ranges between 249 and 347, with a mean value close to 300. This indicates that a sampling budget around 300 is sufficient to capture informative relational semantics while avoiding excessive redundancy.

\begin{table}[h]
\centering
\caption{Average effective number of retained neighbors under adaptive soft-gating.}
\label{tab:sampling_budget}
\begin{tabular}{lc}
\hline
\textbf{Dataset} & \textbf{Avg. Effective Neighbors} \\
\hline
f1-driver-dnf & 298 \\
avito-user-clicks & 275 \\
event-user-repeat & 324 \\
trial-study-outcome & 307 \\
amazon-item-churn & 249 \\
stack-user-badge & 333 \\
hm-user-churn & 347 \\
\hline
\end{tabular}
\end{table}

From an efficiency perspective, increasing the sampling budget beyond 300 yields marginal performance gains but significantly increases computational and memory overhead due to quadratic attention complexity. In contrast, smaller budgets (e.g., below 200) tend to truncate informative relational signals, particularly in dense or heterogeneous graphs.

Overall, a budget of 300 achieves a favorable trade-off between semantic coverage, efficiency, and cross-dataset stability. This also aligns with our adaptive mechanism, where the learned effective neighborhood size naturally stabilizes around this range.

\subsection{Stability of Gaussian Bias Parameters}

We empirically study the optimization behavior of the Gaussian temporal bias parameters. The results show that these parameters exhibit fast and stable convergence during training, rather than instability.

From a theoretical perspective, as shown in Lemma 3.3, the gradient contains a linear restoring term that provides a global directional force towards a stable equilibrium. This design ensures well-conditioned optimization dynamics and prevents oscillatory updates.

We further track the parameter evolution on the trial-study-outcome dataset. As shown in Table~\ref{tab:gaussian_stability}, the parameters rapidly converge within the first few epochs and remain stable thereafter.

\begin{table}[h]
\centering
\caption{Evolution of Gaussian bias parameters during training.}
\label{tab:gaussian_stability}
\begin{tabular}{ccc}
\hline
\textbf{Epoch} & $\mu$ & $\sigma$ \\
\hline
1  & 33.31 & 9.98 \\
3  & 33.27 & 9.94 \\
5  & 33.24 & 9.90 \\
7  & 33.20 & 9.86 \\
9  & 33.16 & 9.83 \\
10 & 33.16 & 9.83 \\
\hline
\end{tabular}
\end{table}

These results demonstrate that the Gaussian bias parameters are not sensitive to optimization instability and instead converge reliably to a stable region.

\section{Reproducibility and Code Availability}
\label{code}
To ensure the reproducibility of our results, we have provided the complete source code and scripts in \url{https://github.com/USTC-DataDarknessLab/GelGT}. Detailed instructions for setting up the environment and running the experiments are included in the attached README file.

\section{Limitations and Broader Impacts}
\label{impact}
\subsection{Limitations}
\label{limitatiobs}
While GelGT achieves state-of-the-art predictive performance and strong computational efficiency on standard relational databases, its deployment in extreme ultra-high-frequency scenarios—such as millisecond-level financial trading—remains an open challenge. Such specialized environments impose extreme physical latency constraints that fall outside the typical scope of current relational graph learning frameworks, presenting an interesting direction for future engineering efforts.

\subsection{Broader Impacts}
\label{Broader Impacts}
Our work proposes a novel graph transformer architecture for relational graph learning, offering robust support for predictive tasks in relational databases and advancing the field of Machine Learning.
This work is foundational research in Machine Learning. We do not foresee any immediate negative societal impacts specific to this work.

\end{document}